\documentclass[]{mosi}
\usepackage{helvet} 
\usepackage{amsmath} 
\usepackage{natbib}
\usepackage{graphicx}
\usepackage{subcaption} 
\usepackage[toc,page,header]{appendix}
\usepackage[utf8]{inputenc} 
\usepackage[T1]{fontenc}    
\usepackage{hyperref}       
\usepackage{url}            
\usepackage{booktabs}       
\usepackage{amsfonts}       
\usepackage{nicefrac}       
\usepackage{microtype}      
\usepackage{wrapfig}
\usepackage{amssymb}        
\usepackage{titletoc}
\usepackage{minitoc}
\usepackage{array}
\usepackage{etoolbox}

\definecolor{lightblue}{RGB}{200, 230, 255}  
\definecolor{headerblue}{RGB}{150, 200, 255} 
\usepackage{pgfplots}
\usepackage{xcolor}
\usepackage{float}
\usepackage{comment}
\usepackage{multirow}
\usepackage{makecell}
\usepackage{siunitx}
\usepackage{tikz}
\usepackage{pgf-pie}
\usepackage[export]{adjustbox}
\usepackage{ragged2e}
\usepackage{tabularx}
\usepackage{caption}
\usepackage{enumitem}
\usepackage{pifont}
\usepackage[hang,flushmargin]{footmisc}
\usepackage{tcolorbox}
\tcbuselibrary{breakable}
\tcbuselibrary{skins}
\usepackage{tabularx}
\usepackage{listings}
\usepackage{threeparttable}
\usepackage{setspace}
\usepackage[scheme=plain]{ctex} 
\definecolor{MossCyan}{HTML}{82D9FF} 
\definecolor{MossBlue}{HTML}{82B1FF}

\definecolor{ForestGreen}{RGB}{34, 139, 34}
\definecolor{Red}{RGB}{255, 0, 0}

\definecolor{tickG}{rgb}{0.1, 0.588, 0.1}
\definecolor{crossR}{rgb}{0.588, 0.1, 0.1}

\definecolor{frenchblue}{rgb}{0.0, 0.45, 0.73}
\definecolor{babyblue}{rgb}{0.54, 0.81, 0.94}
\definecolor{classicrose}{rgb}{0.98, 0.8, 0.91}
\definecolor{beige}{rgb}{0.96, 0.96, 0.86}
\definecolor{forestgreen}{HTML}{2e7d43}
\definecolor{blue1}{HTML}{91BBE6}
\definecolor{blue2}{HTML}{3F90E0}
\definecolor{blue3}{HTML}{316FAD}
\definecolor{color1}{HTML}{FF9999}
\definecolor{color2}{HTML}{FF6666}
\definecolor{color3}{HTML}{FF3333}
\definecolor{color4}{HTML}{E60000}
\definecolor{color5}{HTML}{B30000}
\definecolor{color6}{HTML}{8CD98C}
\definecolor{color7}{HTML}{53c653}
\definecolor{color8}{HTML}{00B050}
\definecolor{color9}{HTML}{2d862d}
\definecolor{color10}{HTML}{206020}
\definecolor{color11}{HTML}{cca300}
\newtcolorbox{promptbox}[2][]{
    colback=white,
    coltext=black,
    arc=3mm,
    boxrule=0.5pt,
    colframe=black!60!white,
    title={#2},
    colbacktitle=black,
    coltitle=white,
    fonttitle=\bfseries,
    top=8pt,
    bottom=8pt,
    left=10pt,
    right=10pt,
    breakable,
    before upper={%
        \linespread{1}\selectfont
        \setlength{\parskip}{1ex plus 0.2ex minus 0.2ex}%
        \setlength{\parindent}{0pt}%
    },
    #1
}
\newtheorem{theorem}{Theorem}
\newtheorem{proposition}{Proposition}
\newtheorem{lemma}{Lemma}

\theoremstyle{definition}

\theoremstyle{remark}

\title{\scalebox{0.94}{BandPO: Bridging Trust Regions and Ratio Clipping via}\\\scalebox{0.94}{ Probability-Aware Bounds for LLM Reinforcement Learning}}
\author{
Yuan Li$^{1,2}$ \hspace{.3em}
Bo Wang$^{1}$\hspace{.3em}
Yufei Gao$^{1}$ \hspace{.1em}
Yuqian Yao$^{1,2}$ \hspace{.3em}
Xinyuan Wang$^{1}$ \hspace{.3em}
\\
Zhangyue Yin$^{1}$ \hspace{.2em}
Xipeng Qiu$^{1,2,\dagger}$
\\
[1ex]
\texttt{liyuan24@m.fudan.edu.cn}, \texttt{xpqiu@fudan.edu.cn} \\
[1ex]
$^{1}$Fudan University   
$^{2}$Shanghai Innovation Institute   
\\
}
\abstract{
\begin{abstract}
\noindent Proximal constraints are fundamental to the stability of the Large Language Model reinforcement learning. 
While the canonical clipping mechanism in PPO serves as an efficient surrogate for trust regions, we identify a critical bottleneck: fixed bounds strictly constrain the upward update margin of low-probability actions, disproportionately suppressing high-advantage tail strategies and inducing rapid entropy collapse. 
To address this, we introduce \textbf{Band-constrained Policy Optimization} (BandPO).
BandPO replaces canonical clipping with \textbf{Band}, a unified theoretical operator that projects trust regions defined by $f$-divergences into dynamic, probability-aware clipping intervals.
Theoretical analysis confirms that Band effectively resolves this exploration bottleneck.
We formulate this mapping as a convex optimization problem, guaranteeing a globally optimal numerical solution while deriving closed-form solutions for specific divergences.
Extensive experiments across diverse models and datasets demonstrate that BandPO consistently outperforms canonical clipping and Clip-Higher, while robustly mitigating entropy collapse.
\checkdata[GitHub]{\url{https://github.com/OpenMOSS/BandPO.git}}
\end{abstract}
}
\begin{document}
\maketitle
{
  \renewcommand{\thefootnote}{}
  \footnotetext{$^{\dagger}$ Corresponding author.}
}
\section{Introduction}
\label{sec:intro}
\noindent Reinforcement Learning from Human Feedback (RLHF) has established itself as the dominant paradigm for the post-training of Large Language Models (LLMs), wherein the proximal constraint on policy updates serves as a pivotal mechanism designed to balance optimization stability with effective exploration \citep{InstructGPT}.
\citet{PPO} emulate trust-region updates via clipping the surrogate objective, circumventing the expensive Fisher Information computations required by TRPO \citep{TRPO}.
This clipping mechanism has emerged as the default configuration for LLM RL and is extensively adopted in GRPO \citep{DeepSeek_Math} and its variants. 

\noindent While the canonical clipping mechanism ensures stability by emulating trust-region updates, \citet{Clip_Low_Clip_High} argue that it implicitly inhibits exploration by imposing a detrimental bias against policy entropy.
More precisely, in Reinforcement Learning with Verifiable Rewards (RLVR) scenarios, while lower-bound clipping tends to increase entropy, upper-bound clipping decreases it; with symmetric clipping thresholds, the latter effect dominates, resulting in net entropy reduction even when the algorithm is fed purely random rewards \citep{SpuriousRewards}.
This bias effectively silences the gradient signals for low-probability yet high-advantage actions, preventing the model from reinforcing novel, superior strategies that lie in the tail of the distribution. 
To mitigate this, \citet{DAPO} proposes the Clip-Higher strategy to relax the upper clipping bound. While \citet{EntropyMechanism} acknowledges that it delays entropy collapse, they also highlight its instability—often leading to performance collapse after saturation. This indicates that adjusting thresholds fails to address the inherent limitations of fixed clipping bounds.

\begin{figure}[t]
    \centering
    \begin{subfigure}[b]{0.48\linewidth}
        \centering
        \includegraphics[width=\linewidth]{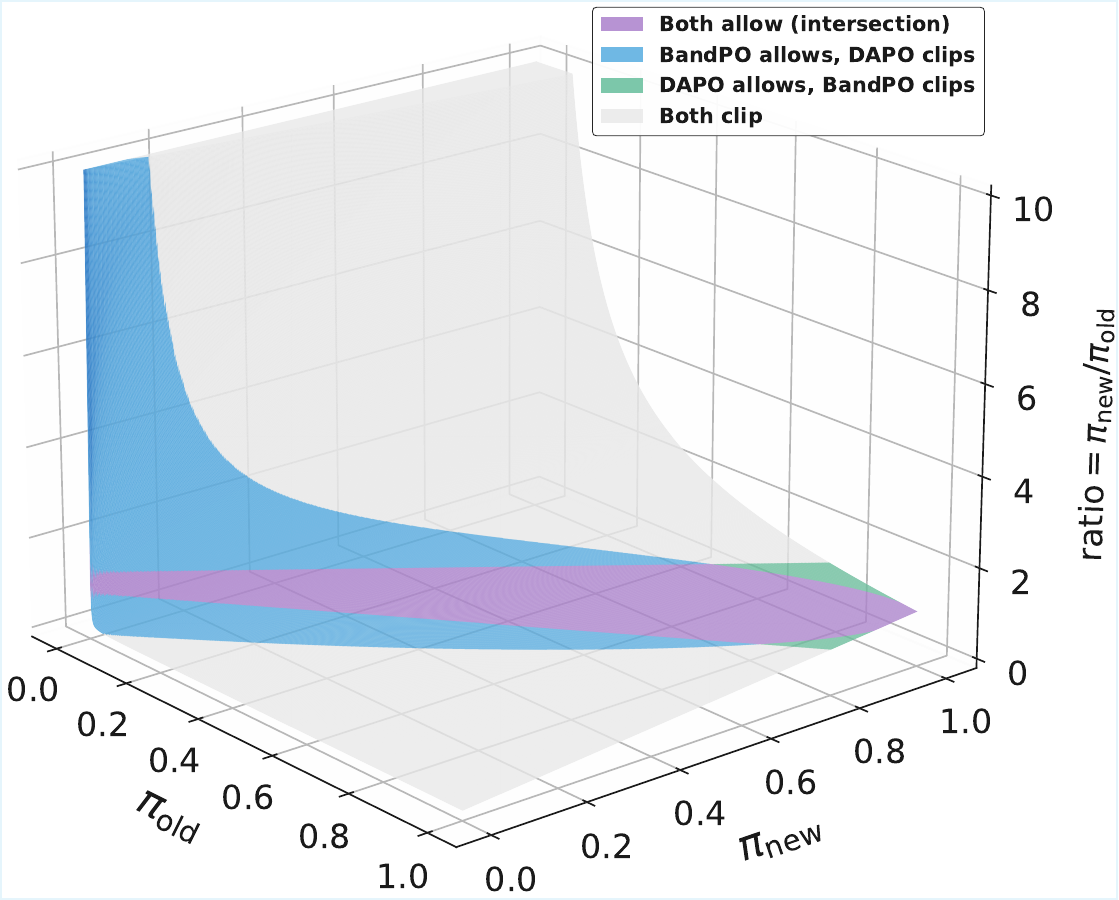}
        \caption{Ratio clipping regions}
        \label{fig:3d-vis-compare}
    \end{subfigure}
    \hfill 
    \begin{subfigure}[b]{0.48\linewidth}
        \centering
        \includegraphics[width=\linewidth]{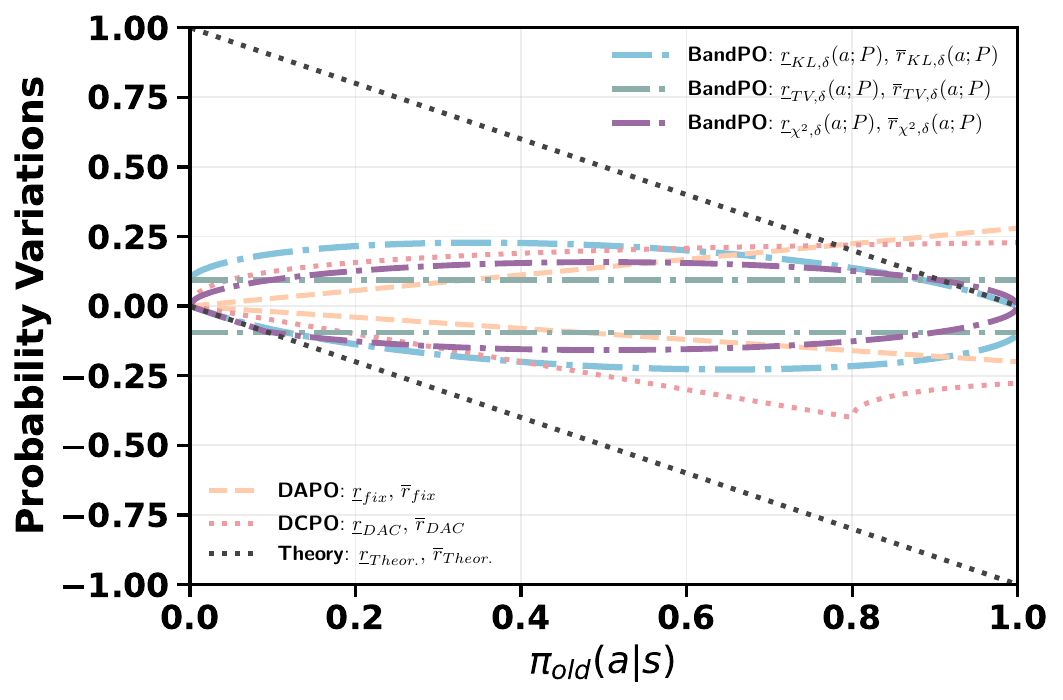}
        \caption{Bounds of Probability Variation}
        \label{fig:probability_variation}
    \end{subfigure}
    \vspace{-0.5em} 
    \caption{\textbf{Comparison of clipping bounds between BandPO and baselines.} 
    (\textbf{a}) \textbf{Comparison of ratio clipping regions: BandPO vs. DAPO.} While DAPO enforces fixed asymmetric bounds ($\epsilon_+ = 0.28, \epsilon_- = 0.2$), BandPO projects a KL-induced trust region ($\delta = 0.1$) into dynamic bounds. The blue region highlights the expanded margin for low-probability, positive-advantage actions, effectively preventing premature saturation and preserving critical exploration gradients. 
    (\textbf{b}) \textbf{Comparison of the Bounds of Probability Variation.} We visualize the bounds of variation derived from the Theoretical Simplex, DAPO, DCPO, and BandPO (ours). The symbols $\bar{r}$ and $\underline{r}$ denote the upper and lower clipping boundaries, respectively. Parameters are fixed at $\epsilon_+ = 0.28, \epsilon_- = 0.2$, and $\delta=0.1$. BandPO strictly adheres to physical simplex constraints while unlocking significant upward variation for low-probability actions.}
    \label{fig:combined_comparison}
\end{figure}

\noindent We formally characterize a critical structural bottleneck inherent in the canonical clipping mechanism, as detailed in Section~\ref{sec:bottlenecks}.
Specifically, constraining the probability ratio within fixed bounds enforces a linear dependence, wherein the maximum feasible probability variation scales proportionally with the old probability. 
Consequently, for positive-advantage actions, lower probabilities dictate vanishingly small margins for upward variation, rendering them susceptible to premature clipping and nullifying their gradient contributions.
Constrained by the necessity of proximal policy updates, the margin for expanding the fixed clipping ratio range is limited, failing to reconcile the fundamental tension between optimization constraints and effective exploration.

\noindent To address this bottleneck, we propose \textbf{Band-constrained Policy Optimization (BandPO)}. We introduce a unified theoretical operator, \textbf{Band}, which projects trust regions induced by general $f$-divergences into dynamic, probability-aware clipping intervals. 
BandPO substitutes the clipping in GRPO with Band, employing a single interpretable radius parameter to enforce proximal constraints, thereby significantly streamlining hyperparameter tuning.
Mathematically, we frame this mapping as a convex optimization problem, guaranteeing globally optimal numerical solutions while deriving efficient closed-form solutions for specific instances like Total Variation (TV) and Pearson $\chi^2$-divergence.
Crucially, we theoretically analyze the properties of Band and confirm that it naturally circumvents this bottleneck. As exemplified by the KL case in Figure~\ref{fig:3d-vis-compare}, the projected bounds adaptively expand the feasible upward margin for low-probability actions to prevent premature clipping, corresponding to the expanded blue region.
Empirically, compared with GRPO and GRPO with clip-higher, BandPO demonstrates consistent performance improvements across Qwen2.5 (3B, 7B) and Llama3 (8B) on multiple mathematical benchmarks, while robustly mitigating entropy collapse.
Our main contributions are summarized as follows:
\begin{itemize}[leftmargin=*, nosep]
    \item We formally characterize a critical bottleneck in canonical clipping, revealing that the feasible upward update margin scales linearly with the action probability. This tends to nullify gradients for low-probability, positive-advantage actions, inhibiting effective exploration.
    \item We propose BandPO, introducing a unified \textbf{Band} operator to project $f$-divergence-induced trust regions into dynamic clipping intervals. We formulate this as a convex optimization problem, guaranteeing globally optimal numerical solutions while deriving closed-form solutions for specific divergences, theoretically demonstrating that BandPO circumvents the bottleneck.
    \item We demonstrate that BandPO achieves consistent performance gains over GRPO and Clip-Higher baselines across various models on math benchmarks, robustly mitigating entropy collapse.
\end{itemize}

\section{Related Work}
\paragraph{From Trust Regions to Ratio Clipping.} 
Proximal constraints are fundamental to stable policy optimization, ensuring that the updated policy remains within a controlled vicinity of the sampling policy. 
Introduced by \citet{TRPO}, the trust-region concept has been widely adopted in policy gradient optimization to impose proximal constraints. 
\citet{TRPO} employs the KL-induced trust region to ensure that the new policy remains within a small neighborhood of the old policy at each update.
Theoretically, such trust regions can be characterized by various distributional discrepancies, including integral probability metrics~\citep{IPM} and the family of $f$-divergences—such as Pearson $\chi^2$ and TV~\citep{f-divergence, f-gan}.
However, the resulting constrained optimization problem involves an inequality constraint, rendering it computationally prohibitive for large-scale applications.
To address this issue, \citet{PPO} proposed two variants—PPO-Penalty and PPO-Clip—that eliminate the need for conjugate gradient methods.
The PPO-Clip mechanism and its variants have been extensively adopted across diverse domains, spanning complex strategic gaming \citep{OpenAIFive, HideAndSeek}, physics-based character animation \citep{DeepMimic}, and robotic control \citep{DexterousHand, MAPPO}.
Subsequent works have established the clipping mechanism as the dominant paradigm for LLM post-training \citep{InstructGPT, RLHF_Frame}. 
Concurrently, there is a notable shift towards critic-free paradigms to enhance computational efficiency \citep{ReMax, RLOO, REINFORCE_plus_plus, DeepSeek_Math}.

\paragraph{Adaptive Clipping Variants.}
The clipping mechanism is favored for its simplicity, yet it also raises ongoing questions regarding the appropriate choice of clipping bounds. 
To mitigate instability from negative-advantage actions, \citet{Dual_Clip} introduces an auxiliary lower bound, adopted in the LLM RL framework~\citep{VERL}. 
\citet{PPO_Lamda} proposes a state-wise adaptive clipping mechanism that modulates the clipping strength according to state importance, while \citet{Decaying_Clip} applies a simple time-decaying schedule to the clipping range. 
Despite their empirical success, these heuristics rely on auxiliary hyperparameters lacking a clear theoretical grounding, rendering them brittle and difficult to tune.
\citet{TRGPPO} establishes a theoretical connection between KL-divergence and the clipping bounds, demonstrating performance improvements in continuous control tasks.

\paragraph{Clip Control in LLM.}
LLM RL operates in an extremely high-dimensional action space, where the combination of long reasoning horizons and extensive group sampling results in a high cumulative incidence of low-probability actions. To address this issue, DAPO~\citep{DAPO}  proposes the Clip-Higher strategy, which decouples and relaxes the clipping upper bounds. 
Building upon the decoupled bounds, DCPO~\citep{DCPO} derives dynamic clipping boundary functions via inequality relaxation, which dynamically adjust according to the action probabilities.

However, compared to the maturity of continuous control, the theoretical underpinnings of clipping mechanisms in LLM RL remain under-explored.
Existing methods lack a principled framework to control clipping bounds via simple, effective, and interpretable parameters, consequently struggling to balance proximal constraints with effective exploration.
To bridge this gap, we propose BandPO, which employs a unified operator that projects $f$-divergences-induced trust regions into probability-aware clipping intervals, resolving the bottleneck using an interpretable parameter.

\section{Preliminaries}
\label{sec:preliminaries}
\subsection{Notation}
\label{subsec:notation}
We formulate the RL alignment of LLMs as a discrete Markov Decision Process. Let $\pi_\theta$ denote the policy represented by the LLM.
Given a prompt $x$ sampled from a dataset $\mathcal{D}$, the policy generates a response sequence $y = (a_1, a_2, \dots, a_T)$ by auto-regressively sampling from a vocabulary $\mathcal{V}$ of size $V = |\mathcal{V}|$. Each action $a_t \in \mathcal{V}$ corresponds to a token generated by the LLM. 
At step $t$, the state $s_t = (x, y_{<t})$ comprises the prompt $x$ and the preceding tokens $y_{<t} = (a_1, \dots, a_{t-1})$. 
The policy $\pi_\theta$ maps $s_t$ to a conditional probability distribution $\pi_\theta(\cdot \mid s_t)$ over $\mathcal{V}$. 
We denote by $\mathbb{R}$ and $\mathbb{R}_{+}$ the sets of real and non-negative real numbers, respectively. Let $\Delta^{V} \triangleq \{p \in \mathbb{R}_{+}^{V} \mid \sum_{i=1}^{V} p_i = 1\}$ denote the probability simplex over $V$ categories.
The optimization objective is to maximize the expected reward: $J(\theta) = \mathbb{E}_{x \sim \mathcal{D}, y \sim \pi_\theta}[R(x, y)]$, where $R(x, y)$ denotes a sparse, sequence-level scalar reward, typically derived from verification signals from verification signals.

\subsection{Clip-Based Proximal Constraints in LLM RL}
\label{subsec:clipRL}

Consider an iterative optimization process where we update $\pi_\theta$ using trajectories sampled by $\pi_{\text{old}}$. The probability ratio $r_t(\theta) = \frac{\pi_\theta(a_t|s_t)}{\pi_{\text{old}}(a_t|s_t)}$ serves as an importance sampling weight to correct for the distributional shift. 
\citet{PPO} proposed clipping $r_t$ to impose proximal constraints, while employing a critic model for Generalized Advantage Estimation \citep{GAE}.
Inheriting the clipping mechanism, \citet{DeepSeek_Math} introduced Group Relative Policy Optimization (GRPO), which circumvents the computational burden of training a critic model by estimating advantages from a group of sampled responses. 
We denote the objective as $\mathcal{J}^{\text{GRPO}}(\theta)$, which aggregates the per-token objectives across a group of $G$ outputs:
\begin{equation}
\label{eq:grpo}
\mathcal{J}^{\text{GRPO}}(\theta) = \mathbb{E}_{x \sim \mathcal{D}, \{y_i\}_{i=1}^G \sim \pi_{\text{old}}} \left[ \frac{1}{G} \sum_{i=1}^G {\frac{1}{T_i} \sum_{t=1}^{T_i} \mathcal{J}_t(\theta; y_i)} \right],
\end{equation}
Specifically, we formulate $\mathcal{J}_t(\theta; y_i)$ to admit asymmetric clipping boundaries:
\begin{equation}
\label{eq:token_objective}
    \mathcal{J}_t(\theta; y_i) = \min \big( r_{t,i} A_{t,i}, \text{clip}(r_{t,i}, 1-\epsilon_-, 1+\epsilon_+) A_{t,i} \big) - \beta \underbrace{D_{\mathrm{KL}}(\pi_\theta(\cdot|s_t) \| \pi_{\text{ref}}(\cdot|s_t))}_{\text{per-token KL divergence}}.
\end{equation}
This formulation averages the objective over tokens to accommodate varying sequence lengths $T_i$. $G$ denotes the group size, and $r_{t,i}$ represents the probability ratio. The advantage $A_{t,i}$ is derived from the sequence-level rewards $R_i$, standardized within the group via $A_{t,i} = (R_i - \mu_R)/\sigma_R$, where $\mu_R$ and $\sigma_R$ are the mean and standard deviation of rewards $\{R_j\}_{j=1}^G$, respectively.
The term $\beta D_{\mathrm{KL}}$ serves as a regularization penalty, anchoring $\pi_\theta$ to the reference policy $\pi_{\text{ref}}$ to preserve linguistic coherence. 
While GRPO employs symmetric bounds ($\epsilon_+ = \epsilon_-$), Equation~\eqref{eq:token_objective} generalizes this to allow asymmetric bounds ($\epsilon_+ > \epsilon_-$).

\section{The Bottleneck in Canonical Clipping}
\label{sec:bottlenecks}
The canonical clipping mechanism confines the probability ratio $r_t(\theta)$ to the interval $[1-\epsilon_-, 1+\epsilon_+]$, where $\epsilon_-, \epsilon_+ > 0$ are fixed constants. This constraint is formulated as:
\begin{equation}
    \label{eq:fixed_ratio_range}
    1 - \epsilon_{-} \leq \frac{\pi_{\theta}(a \mid s)}{\pi_{\text{old}}(a \mid s)} \leq 1 + \epsilon_{+}.
\end{equation}
We define the probability variation of an action $a$ given state $s$ as: $\Delta \pi(a|s) = \pi_{\theta}(a|s) - \pi_{\text{old}}(a|s)$, which explicitly quantifies the deviation of the probability relative to the sampling policy.
Multiplying all terms in Inequality~\eqref{eq:fixed_ratio_range} by the strictly positive $\pi_{\text{old}}(a|s)$ and subtracting $\pi_{\text{old}}(a|s)$ yields the set of feasible probability variations, denoted as $\mathcal{C}_{\text{clip}}$:
\begin{equation}
    \label{eq:fixed_variation_range_set}
    \left\{ \Delta \pi(a|s) \;\middle|\; -\epsilon_- \pi_{\text{old}}(a|s) \le \Delta \pi(a|s) \le \epsilon_+ \pi_{\text{old}}(a|s) \right\}.
\end{equation}
Theoretically, the constraint $\pi_\theta(\cdot|s) \in \Delta^V$  allows the variation to range from $-\pi_{\text{old}}$ (dropping to $0$) to $1 - \pi_{\text{old}}$ (rising to $1$). We define this maximal feasible set as $\mathcal{C}_{\text{simplex}}$:
\begin{equation}
    \label{eq:theoretical_variation_range_set}
    \left\{ \Delta \pi(a|s) \;\middle|\; -\pi_{\text{old}}(a|s) \le \Delta \pi(a|s) \le 1 - \pi_{\text{old}}(a|s) \right\}.
\end{equation}
The derivation above establishes the relationship $\Delta \pi(a|s) = (r_t(\theta) - 1)\pi_{\text{old}}(a|s)$.
Using this relationship to map $\mathcal{C}_{\text{simplex}}$ into the ratio space yields the theoretical bounds of $r_t(\theta)$:
\begin{equation}
\label{eq:theory_ratio_range}
0 \le \frac{\pi_{\theta}(a \mid s)}{\pi_{\text{old}}(a \mid s)} \le \frac{1}{\pi_{\text{old}}(a \mid s)}.
\end{equation}

As illustrated in Fig. \ref{fig:probability_variation}, analysis within $\mathcal{C}_{\text{simplex}}$ reveals that fixed clipping bounds on $r_t(\theta)$ (e.g., DAPO) constrain probability variations to scale linearly with $\pi_{\text{old}}(a|s)$. Consequently, the feasible upward shift vanishes as the probability approaches zero, contradicting the theoretical upper bound. This discrepancy induces premature clipping for low-probability actions with positive advantages, effectively nullifying gradient contributions. Conversely, in high-probability regimes, the upper bounds of DAPO and DCPO exceed inherent simplex limits, rendering the constraint mathematically vacuous. Thus, fixed bounds fail to reconcile proximal constraints with exploration efficacy, while dynamic bounds lacking clear theoretical grounding result in partial constraint failure. These limitations necessitate a transition to theoretically grounded, dynamic clipping bounds of the probability ratio.

Consider a tail token $\pi_{\text{old}} = 0.08$ with $\epsilon_+ = 0.2$. The allowed update $\Delta \pi \le 0.016$ is negligible compared to the theoretical capacity ($0.92$). Enabling meaningful exploration (e.g., $\Delta \pi \approx 0.4$) would require raising $\epsilon_+$ to $\approx 5.0$. However, this renders the constraint mathematically vacuous for head tokens ($\pi_{\text{old}} = 0.8$), as the permitted shift $\Delta \pi \le 4.0$ far exceeds the physical limit of $0.2$.
Thus, the static clipping approach is fundamentally ill-equipped to bridge the gap between stable optimization and the acquisition of novel and effective strategies from the tail of the probability distribution. 

\section{Method}
\label{sec:method}
In this section, we introduce \textbf{BandPO}. Central to BandPO is the \textbf{Band} operator, which projects trust regions defined by $f$-divergences into probability-aware clipping intervals. Our analysis of Band's theoretical properties demonstrates that it effectively resolves the exploration bottleneck in Section~\ref{sec:bottlenecks}.

\subsection{$f$-Divergence-Induced Trust Regions}
\label{subsec:f_trust_region}
Consider a fixed state $s_t$. Let $P(\cdot) \triangleq \pi_{\text{old}}(\cdot|s_t) \in \Delta^{V}$ and $Q(\cdot) \triangleq \pi_{\theta}(\cdot|s_t) \in \Delta^{V}$ denote the distributions over $\mathcal{V}$, respectively.
We formalize the trust region using the family of $f$-divergences \citep{f-divergence}. Let $f: \mathbb{R}_{+} \to \mathbb{R}$ be a strictly convex function with $f(1) = 0$. The divergence $D_f(Q \| P)$ is defined as:
\begin{equation}
\label{eq:f_divergence_def}
D_f(Q \| P) \triangleq \sum_{a \in \mathcal{V}} P(a) f \left( \frac{Q(a)}{P(a)} \right).
\end{equation}
With $P$ serving as the anchor, we define the trust region $\mathcal{T}_{f,\delta}(P)$ as the convex set constraining $Q$ within a $\delta$-neighborhood of $P$ on the probability simplex $\Delta^V$:
\begin{equation}
\label{eq:f_trust_region_set}
\mathcal{T}_{f,\delta}(P) \triangleq \left\{ Q \in \Delta^{V} \mid D_f(Q \| P) \le \delta \right\},
\end{equation}
where $\delta > 0$ dictates the radius of the trust region.
This geometric formulation generalizes standard proximal constraints.
Notably, choosing the generator function $f(u) = -\log u + u - 1$ recovers the specific trust region employed in TRPO \citep{TRPO}, i.e., $D_{\mathrm{KL}}(P \| Q) \le \delta$.

\subsection{Band: An Operator for Projecting Trust Regions to Probability-Aware Clipping Bounds}
\label{subsec:band_def}
We now derive the probability-aware ratio bounds induced by the geometric constraint $\mathcal{T}_{f,\delta}(P)$.
To facilitate the analysis, we recast the probability ratio $r_t(\theta)$ as a function of the candidate distribution $Q \in \Delta^V$.
For any token $a \in \mathcal{V}$ satisfying $P(a) > 0$, the ratio function is redefined as:
\begin{equation}
\label{eq:ratio_func_def}
r(a; Q, P) \triangleq \frac{Q(a)}{P(a)}.
\end{equation}
While $P$ remains fixed, the candidate distribution $Q \in \Delta^V$ varies subject to the constraint $\mathcal{T}_{f,\delta}(P)$.

\paragraph{Optimal Dynamic Bounds.}
The upper bound of the probability ratio is determined by maximizing the probability $Q(a)$ subject to the constraint $\mathcal{T}_{f,\delta}(P)$.
Since $\mathcal{T}_{f,\delta}(P)$ forms a convex feasible set, this formulation constitutes a convex optimization problem with a linear objective function with respect to the variable $Q$:
\begin{equation}
\label{eq:upper_bound_opt}
{
\overline{r}_{f,\delta}(a; P)
\triangleq
\max_{Q \in \mathcal{T}_{f,\delta}(P)} \; \frac{Q(a)}{P(a)}.
}
\end{equation}
Symmetrically, the minimal admissible probability ratio is determined by solving:
\begin{equation}
{
\label{eq:lower_bound_opt}
\underline{r}_{f,\delta}(a; P)
\triangleq
\min_{Q \in \mathcal{T}_{f,\delta}(P)} \; \frac{Q(a)}{P(a)}.
}
\end{equation}
Crucially, Problems~\eqref{eq:upper_bound_opt} and~\eqref{eq:lower_bound_opt} are convex programs. 
Under strictly convex $f$ and $P(a)\in(0,1)$, any local optimum is the global optimum, enabling stable numerical solutions.

\paragraph{The Band Operator.}
We propose \textbf{Band}, a unified operator designed to supersede canonical clipping by strictly constraining the probability ratio within the feasible interval induced by the $f$-divergence trust region.
Given a generator function $f$ and radius $\delta$, for $\forall P(a) \in (0,1)$, solving Problems~\eqref{eq:upper_bound_opt} and~\eqref{eq:lower_bound_opt} yields the rigorous lower and upper bounds for the ratio.
Crucially, this derivation projects the high-dimensional geometric constraint $\mathcal{T}_{f,\delta}(P)$ onto a scalar interval specific to action $a$, to which the ratio is strictly confined.
We formulate this operation as:
\begin{equation}
\label{eq:band_operator}
{
\mathrm{Band}_{f,\delta}\big(r;\,a,P\big)
\triangleq
\operatorname{clip}\!\left(
r, \; \underline{r}_{f,\delta}(a; P), \; \overline{r}_{f,\delta}(a; P)
\right).
}
\end{equation}
In stark contrast to the fixed clipping interval $[1-\epsilon, 1+\epsilon]$, Band yields probability-aware bounds governed solely by the single, interpretable trust-region radius $\delta$.

\subsection{Reduction to Univariate Optimization}
\label{subsec:reduction}
Although Problems~\eqref{eq:upper_bound_opt} and~\eqref{eq:lower_bound_opt} are convex, the decision variable $Q$ resides in a high-dimensional simplex ($V \approx 10^5$), rendering direct optimization computationally prohibitive.
We circumvent this by exploiting the structural symmetry of the divergence constraint to strictly reduce the problem to a univariate formulation, as established in the following lemma, proven in Appendix ~\ref{app:proof_lemma_scaling}.
\begin{lemma}[Optimality of Uniform Complement Rescaling]
\label{lem:complement_rescaling}
Given a reference distribution $P \in \Delta^V$ with full support (i.e., $P(v)>0, \forall v$) and an action $a \in \mathcal{V}$, the optimal solution $Q^\star$ to the extremal Problems~\eqref{eq:upper_bound_opt} and~\eqref{eq:lower_bound_opt} {must strictly preserve} the relative probability proportions within the complement set $\mathcal{V} \setminus \{a\}$. Specifically, the probability ratio is constant for all non-target actions:
\begin{equation}
\label{eq:q_star_structure}
\frac{Q^\star(b)}{P(b)} = c, \quad \forall b \in \mathcal{V} \setminus \{a\},
\end{equation}
where the scaling factor $c \in \mathbb{R}_{+}$ is uniquely determined by the simplex normalization constraint $\sum_{v \in \mathcal{V}} Q^\star(v) = 1$. 
This implies that $Q^\star$ is fully parameterized by the single scalar ratio $r = Q^\star(a)/P(a)$ at the target action. 
\end{lemma}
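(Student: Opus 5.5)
Fix the target action $a$ and let $Q$ be any feasible point of Problem~\eqref{eq:upper_bound_opt} or~\eqref{eq:lower_bound_opt}, with $q \triangleq Q(a)$. The idea is to replace $Q$ by the candidate obtained from \emph{uniform complement rescaling}. We keep $Q(a)=q$ and set $\tilde Q(b) = c\,P(b)$ for all $b \neq a$, where $c = (1-q)/(1-P(a))$. This $\tilde Q$ lies in $\Delta^V$ and has the same objective value $q/P(a)$. Two things then need to be shown: that $\tilde Q$ is still feasible, and that it is the unique such candidate. Together these imply that any optimizer may be taken to have the form~\eqref{eq:q_star_structure}, and under strict convexity that it must have that form.

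\textbf{Step 1 (Jensen on the complement).} Write $P_c \triangleq 1-P(a) = \sum_{b\neq a}P(b) > 0$; this is positive because $P$ has full support and $V \ge 2$. Define weights $w_b = P(b)/P_c$ on $\mathcal{V}\setminus\{a\}$. Then
\begin{equation*}
\sum_{b\neq a} P(b) f\!\left(\tfrac{Q(b)}{P(b)}\right) = P_c \sum_{b\neq a} w_b f\!\left(\tfrac{Q(b)}{P(b)}\right) \ge P_c\, f\!\left(\sum_{b\neq a} w_b \tfrac{Q(b)}{P(b)}\right) = P_c\, f\!\left(\tfrac{1-q}{P_c}\right).
\end{equation*}
The right-hand side is exactly the complement contribution of $\tilde Q$. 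The $a$-term $P(a)f(q/P(a))$ is identical for $Q$ and $\tilde Q$, so $D_f(\tilde Q\|P) \le D_f(Q\|P) \le \delta$. Hence $\tilde Q \in \mathcal{T}_{f,\delta}(P)$. Because the objective depends only on $q$, replacing any optimizer $Q^\star$ by $\tilde Q^\star$ preserves optimality. This already yields the reduction to the single scalar $r = q/P(a)$.

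\textbf{Step 2 (strictness).} Strict convexity of $f$ makes Jensen's inequality strict unless all ratios $Q(b)/P(b)$, $b\neq a$, are equal; all weights $w_b$ are positive by full support. So any feasible $Q$ not of the form~\eqref{eq:q_star_structure} satisfies $D_f(\tilde Q\|P) < D_f(Q\|P) \le \delta$, which leaves slack in the constraint. I will then perturb $q$ slightly in the improving direction (up for the max, down for the min) while keeping the rescaled structure. The function $g(q) \triangleq P(a)f(q/P(a)) + P_c f((1-q)/P_c)$ is convex and continuous on the interior of its domain, so a small perturbation keeps $g(q') \le \delta$. This contradicts optimality of $Q$. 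Therefore every optimizer is non-rescaled-free: it has constant complement ratio $c$.

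\textbf{Main obstacle.} The perturbation step in Step 2 needs $q$ to be movable. This fails at the simplex boundary: when $q=1$ (so $c=0$) or $q=0$, the complement ratios are forced to be equal anyway. The step also relies on continuity of $f$ at $0$; I will take $f(0)=\lim_{u\to 0^+} f(u)$, possibly $+\infty$. I would handle these boundary cases separately. When $q\in\{0,1\}$, the normalization $\sum_b Q(b) = 1-q$ with $Q \ge 0$ directly forces all $Q(b)=0$ or pins $c$, so the constant-ratio structure holds trivially. In the interior, convexity of $g$ gives continuity. The uniqueness of $c$ is immediate from normalization: $c = (1-rP(a))/(1-P(a))$.
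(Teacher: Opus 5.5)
Your Step 1 is the paper's own argument: fix $Q(a)$ and apply Jensen/strict convexity on the complement. Your slack-and-perturb Step 2 improves on the paper, because it actually justifies ``must'', which the paper only asserts when it says the optimal policy must minimize the complement contribution.

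The gap is in your boundary treatment. You claim that at $q=0$ the normalization ``pins $c$'', so the constant-ratio structure holds trivially. It does not. The condition $\sum_{b\neq a}Q(b)=1$ fixes the $c$ of the \emph{rescaled} candidate $\tilde Q$. It places no constraint on the individual ratios $Q(b)/P(b)$ of $Q$ itself. Only $q=1$ forces the complement to vanish. Moreover, $q=0$ is exactly where your perturbation fails for the minimization problem, since $q$ cannot be decreased further. So nothing in your argument covers this case.

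In fact no argument can cover it, because the ``must'' claim is false there. Take $f(u)=(u-1)^2$, $P=(\tfrac13,\tfrac13,\tfrac13)$, $a$ the first action, and $\delta=0.6$. Then $Q=(0,0.4,0.6)$ has $D_f(Q\|P)=0.56\le\delta$ and attains the minimal ratio $0$, yet its complement ratios are $1.2$ and $1.8$. For comparison, the rescaled point $(0,\tfrac12,\tfrac12)$ has divergence $g_f(\tfrac13,0)=0.5$. This is the strictly saturated lower regime $g_f(p,0)<\delta$ of Proposition~\ref{prop:saturation}. The paper itself says this regime occurs for $\chi^2$ and TV at $\delta=0.1$, and its proof avoids it only through the informal ``$\delta$ is small'' interiority step.

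What your argument does establish is the following.
\begin{itemize}
\item[(i)] From Step 1, an optimizer can always be \emph{chosen} with constant complement ratio. This is all Theorem~\ref{thm:scalar_reduction} needs.
\item[(ii)] For the maximization problem, every optimizer has this form. Since $Q=P$ is feasible, $q^\star\ge P(a)>0$. Unequal complement ratios force some $Q(b)>0$, hence $q^\star<1$, so your perturbation always applies.
\item[(iii)] For the minimization problem, the same holds whenever $q^\star>0$, or when $g_f(p,0)=\delta$, since strict Jensen then forces equality. In particular it always holds for KL, where $f(0)=+\infty$ makes $q=0$ infeasible.
\end{itemize}
You should state the ``must'' part of the lemma under the assumption $g_f(p,0)\ge\delta$ (or $V=2$), and weaken it to ``may be taken'' in the strictly saturated case.
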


Based on Lemma~\ref{lem:complement_rescaling}, the optimization reduces to determining the scalar ratio $r \triangleq Q(a)/P(a)$ in the target action $a$. Let $p \triangleq P(a)$. The simplex constraint $\sum_{v} Q(v) = 1$ uniquely determines the complement scaling factor $c$ as a function of $r$, since $rp + c(1-p) = 1 \implies c(r) = \frac{1 - rp}{1 - p}$. Substituting this mapping into Equation~\eqref{eq:f_divergence_def}, the divergence collapses into a univariate function $g_f(p, r)$:
\begin{equation}
\label{eq:scalar_g_derivation}
D_f(Q \| P) = \underbrace{P(a) f(r)}_{\text{target}} + \sum_{b \neq a} P(b) f(c) = p f(r) + (1-p) f\left( \frac{1 - rp}{1 - p} \right)\triangleq g_f(p, r).
\end{equation}
Equation~\eqref{eq:scalar_g_derivation} strictly reduces the high-dimensional constraint $D_f(Q \| P) \le \delta$ to a scalar inequality $g_f(p, r) \le \delta$.
This scalarization effectively shifts the optimization domain from the high-dimensional simplex $Q \in \Delta^V$ to the univariate scalar $r$. Consequently, we transform Problems~\eqref{eq:upper_bound_opt} and~\eqref{eq:lower_bound_opt} into a tractable root-finding problem, where the distinct roots of the equation $g_f(p, r) = \delta$ yield the precise values of the projection clipping bounds $\underline{r}_{f,\delta}$ and $\overline{r}_{f,\delta}$, as formalized in the following theorem (proof in Appendix~\ref{app:proof_thm_scalar}).
\begin{theorem}[Exact Scalarization of Trust-Region Constraints]
\label{thm:scalar_reduction}
Assume the generator function $f: \mathbb{R}_+ \to \mathbb{R}$ is strictly convex with $f(1)=0$. For any action $a$ with $P(a) = p \in (0,1)$, the Problems~\eqref{eq:upper_bound_opt} and~\eqref{eq:lower_bound_opt} are equivalent to finding the roots of the scalar function:
\begin{equation}
\label{eq:scalar_g_def}
g_f(p, r) \triangleq p f(r) + (1 - p) f\left( \frac{1 - r p}{1 - p} \right) = \delta,
\end{equation}
subject to the feasibility constraint $r \in [0, 1/p]$.
Specifically, $g_f(p, r)$ is strictly convex with respect to $r$ and achieves its global minimum of $0$ at $r=1$. Consequently, for any valid trust-region radius $\delta > 0$, the equation $g_f(p, r) = \delta$ possesses exactly two unique roots, corresponding to the optimal clipping bounds:
\begin{align}
    \underline{r}_{f,\delta}(a; P)
    &= \min \left\{ r \in [0, 1] \mid g_f(p, r) = \delta \right\},
    \label{eq:r_start_lower} \\
    \overline{r}_{f,\delta}(a; P)
    &= \max \left\{ r \in [1, 1/p] \mid g_f(p, r) = \delta \right\}.
    \label{eq:r_start_upper}
\end{align}
\end{theorem}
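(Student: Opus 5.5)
We reduce to a univariate problem using Lemma~\ref{lem:complement_rescaling} and then study the function $r\mapsto g_f(p,r)$ on its feasible interval. By the lemma, any optimizer $Q^\star$ of Problems~\eqref{eq:upper_bound_opt} and~\eqref{eq:lower_bound_opt} has the form $Q^\star(a)=rp$ and $Q^\star(b)=c(r)P(b)$ for $b\neq a$, with $c(r)=(1-rp)/(1-p)$. Nonnegativity of $Q^\star$ forces $rp\ge 0$ and $c(r)\ge 0$, i.e.\ $r\in[0,1/p]$. For $Q$ of this form, $D_f(Q\|P)=g_f(p,r)$ by Equation~\eqref{eq:scalar_g_derivation}. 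Hence the two extremal problems become exactly
\[
\max\{\,r\in[0,1/p] : g_f(p,r)\le\delta\,\}\quad\text{and}\quad\min\{\,r\in[0,1/p] : g_f(p,r)\le\delta\,\}.
\]
Conversely, every $r$ in this sublevel set yields a feasible $Q\in\mathcal{T}_{f,\delta}(P)$ with ratio $r$ at $a$, so the reduction loses nothing.

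\textbf{Convexity and minimum.} The map $r\mapsto c(r)$ is affine, and $f$ is strictly convex, so $r\mapsto(1-p)f(c(r))$ is convex. Since $p>0$, the term $pf(r)$ is strictly convex, and the sum $g_f(p,\cdot)$ is therefore strictly convex on $[0,1/p]$. At $r=1$ we have $c(1)=1$, so $g_f(p,1)=pf(1)+(1-p)f(1)=0$. For a global lower bound, apply Jensen's inequality with weights $p$ and $1-p$ at the points $r$ and $c(r)$. The weighted mean is $pr+(1-p)c(r)=1$, so
\[
g_f(p,r)\;\ge\; f(1)=0,
\]
with equality only if $r=c(r)$ by strictness, i.e.\ only at $r=1$. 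Thus $r=1$ is the unique global minimizer, $g_f(p,\cdot)$ is strictly decreasing on $[0,1]$ and strictly increasing on $[1,1/p]$, and the sublevel set $\{g_f\le\delta\}$ is an interval containing $1$ in its interior.

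\textbf{Endpoints and roots.} On $[1,1/p]$, $g_f(p,\cdot)$ is continuous and strictly increasing, starting from $0<\delta$. If $g_f(p,1/p)\ge\delta$, the intermediate value theorem gives exactly one root, and that root is the maximizer. So $\overline{r}_{f,\delta}$ is the (unique, hence maximal) root in $[1,1/p]$, matching Equation~\eqref{eq:r_start_upper}. The lower bound on $[0,1]$ is handled symmetrically.

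\textbf{Main obstacle.} The difficulty is the boundary case where $g_f(p,\cdot)$ never reaches $\delta$ on one side. For example, under KL-type generators $g_f(p,0)$ or $g_f(p,1/p)$ is finite and may be $<\delta$ when $\delta$ is large. Then no root exists, and the optimum sits at the simplex boundary $0$ or $1/p$. The fix I would try is to read the root statement with the convention that the bound is clipped to the feasibility endpoint: treat $g_f$ as extended-valued, via $f(0)=\lim_{u\to0^+}f(u)$, and define the root as the endpoint whenever $g_f<\delta$ there. I would state explicitly that the ``exactly two roots'' claim holds precisely when
\[
\delta\le\min\{g_f(p,0),\,g_f(p,1/p)\},
\]
and otherwise the corresponding bound saturates at $0$ or $1/p$. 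This is consistent with the extremal formulation, since each bound is still the endpoint of the sublevel interval. Continuity of $g_f$ at the endpoints, which requires $f$ to be lower semicontinuous there, is the remaining technical point to verify.
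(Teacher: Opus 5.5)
Your proof is correct. It follows the same skeleton as the paper's: strict convexity of $g_f(p,\cdot)$, a unique zero minimum at $r=1$, strict monotonicity on $[0,1]$ and on $[1,1/p]$, then the intermediate value theorem. Where you differ is in how you justify the two key steps.

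\textbf{Convexity and the minimum.} The paper differentiates. It writes $\partial_r^2 g_f = p f''(r) + \frac{p^2}{1-p} f''(c(r)) > 0$ and locates the minimum from the first-order condition $f'(r)=f'(c(r))$. This tacitly assumes $f$ is twice differentiable with $f''>0$ everywhere, which is strictly stronger than the stated hypothesis. For example, with $f(u)=(u-1)^4$ that second derivative vanishes at $r=1$, even though $g_f$ is still strictly convex. Your route is derivative-free: strict convexity of $pf(r)$ plus convexity of $f$ composed with the affine $c(r)$, and then two-point Jensen with weights $(p,1-p)$ using $pr+(1-p)c(r)=1$. It therefore covers every strictly convex $f$. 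It also shows that $g_f\ge 0$ is simply nonnegativity of the $f$-divergence.

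\textbf{Existence of roots.} The paper just assumes ``$\delta$ is within the feasible range of the divergence.'' You state the exact condition $\delta\le\min\{g_f(p,0),g_f(p,1/p)\}$. You also correctly identify lower semicontinuity of $f$ at $0$ as what is needed for continuity at the endpoints. Your saturation convention matches Proposition~\ref{prop:saturation}.

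\textbf{Two small refinements.}

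First, Lemma~\ref{lem:complement_rescaling} presupposes that an optimizer exists. You can make the reduction independent of this by applying Jensen directly to the complement: for every $Q\in\Delta^V$ with $Q(a)=rp$, one has $D_f(Q\|P)\ge g_f(p,r)$. Hence the set of attainable ratios is exactly $\{r\in[0,1/p] : g_f(p,r)\le\delta\}$, with no compactness argument needed.

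Second, your saturation illustration is off for the paper's KL generator $f(u)=-\log u+u-1$. There $f(0^+)=+\infty$, so $g_f(p,0)=g_f(p,1/p)=+\infty$ and the KL band never saturates. The saturating cases are TV and $\chi^2$, as the paper's figure notes, and also the forward-KL generator $u\log u$.
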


\subsection{Properties of Band Bounds}
\label{subsec:properties}
\begin{figure}[t]
    \centering
    \includegraphics[width=0.55\linewidth]{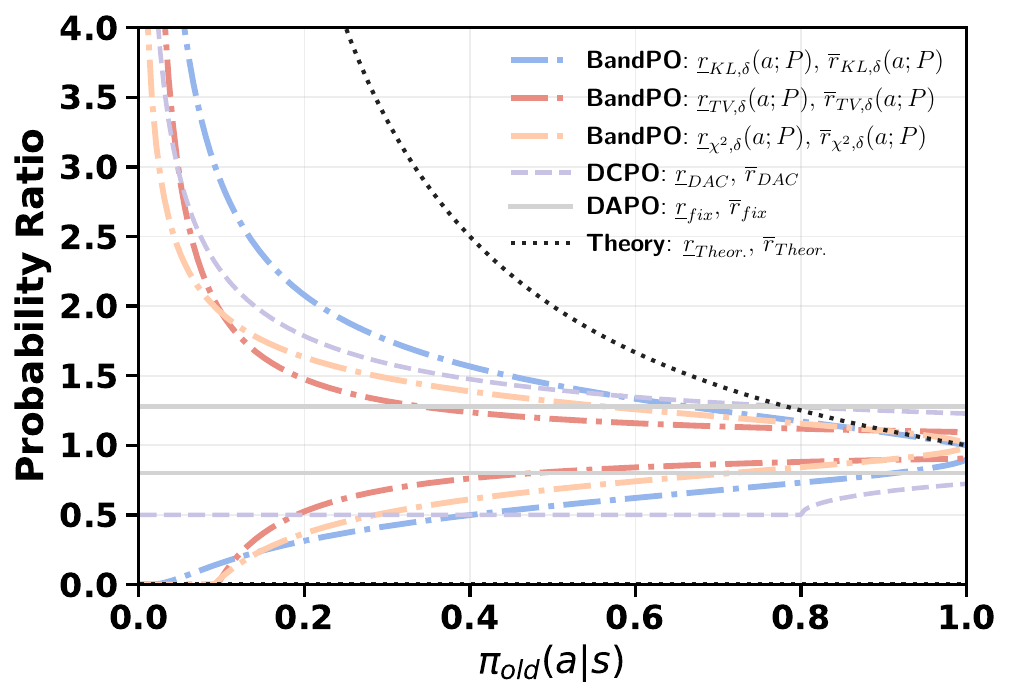}
    \vspace{-6pt} 
    \caption{\textbf{Comparison of Probability Ratio Bounds.} We visualize the ratio bounds derived from the Theoretical Simplex, DAPO, DCPO, and BandPO (ours). As $p \to 1$, BandPO bounds exhibit strict monotonicity: upper bounds decrease while lower bounds increase towards 1. Conversely, as $p \to 0$, the upper bounds of both DCPO and BandPO expand rapidly, effectively preventing premature clipping. Note that for TV and $\chi^2$, the radius $\delta=0.1$ triggers the simplex saturation condition, where the lower bounds explicitly align with the theoretical limit of $0$.}
    \vspace{-3pt} 
    \label{fig:probability_ratio}
\end{figure}

Building upon Theorem~\ref{thm:scalar_reduction}, we theoretically analyze the asymptotic behavior and strict monotonicity of the bounds $\overline{r}_{f,\delta}(p)$ and $\underline{r}_{f,\delta}(p)$ with respect to $p \in (0, 1)$, with proofs provided in Appendix~\ref{app:proof_prop_limits} and~\ref{app:proof_prop_monotonicity}.
This analysis demonstrates that Band theoretically resolves the exploration bottlenecks identified in Section~\ref{sec:bottlenecks}.
\begin{proposition}[Asymptotic Behavior of Band Bounds]
\label{prop:limits}
Given a trust-region radius $\delta > 0$, the bounds $\overline{r}_{f,\delta}(p)$ and $\underline{r}_{f,\delta}(p)$ exhibit the following limiting behaviors as the probability $p \in (0, 1)$ approaches the boundaries of the simplex:
\begin{equation}
    \lim_{p\to 0^+}\overline{r}_{f,\delta}(p) = +\infty, \quad
    \lim_{p\to 0^+}\underline{r}_{f,\delta}(p) = 0, \quad
    \lim_{p\to 1^-}\overline{r}_{f,\delta}(p) = 1.
    \label{eq:all_limits}
\end{equation}
\end{proposition}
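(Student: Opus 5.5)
The plan is to work entirely through the scalarization of Theorem~\ref{thm:scalar_reduction}. Write $g_f(p,r)=pf(r)+(1-p)f\!\left(\frac{1-rp}{1-p}\right)$. By that theorem, $g_f(p,\cdot)$ is strictly convex on $[0,1/p]$, with $g_f(p,1)=0$ as its minimum. Hence the sublevel set $\{r: g_f(p,r)\le\delta\}$ is an interval containing $1$, and the bounds $\underline{r}_{f,\delta}(p)\le 1\le\overline{r}_{f,\delta}(p)$ are its endpoints, truncated to $[0,1/p]$.

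The key observation I will rely on is a monotone comparison. If $g_f(p,R)<\delta$ for some $R\in(1,1/p]$, then convexity together with $g_f(p,1)=0$ gives $g_f(p,r)<\delta$ on all of $[1,R]$. So the upper root, or the saturation value $1/p$ if no root exists, is at least $R$. Symmetrically, if $g_f(p,\varepsilon)<\delta$ for some $\varepsilon\in(0,1)$, then $\underline{r}_{f,\delta}(p)\le\varepsilon$. This reduces the two $p\to0^+$ limits to showing that $g_f(p,r)\to 0$ pointwise for each fixed $r>0$.

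\textbf{Limits as $p\to 0^+$.} Fix $R>1$. For $p<1/R$ we have $R\in[1,1/p]$, and
\begin{equation*}
g_f(p,R)=p\,f(R)+(1-p)\,f\!\left(1+\tfrac{(1-R)p}{1-p}\right).
\end{equation*}
The first term tends to $0$ because $f(R)$ is a finite constant. The second tends to $f(1)=0$, since $f$ is convex and hence continuous on the open interval $(0,\infty)$, whose interior contains $1$. So $g_f(p,R)<\delta$ for all sufficiently small $p$, giving $\overline{r}_{f,\delta}(p)\ge R$ eventually. As $R$ is arbitrary, $\overline{r}_{f,\delta}(p)\to+\infty$.

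The same computation with a fixed $\varepsilon\in(0,1)$ gives $g_f(p,\varepsilon)\to 0$, hence $0\le\underline{r}_{f,\delta}(p)\le\varepsilon$ eventually, and so $\underline{r}_{f,\delta}(p)\to0$.

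The point needing care is that I never evaluate $f$ at $0$, where it may be infinite (KL-type generators). I only use $f(\varepsilon)$ with $\varepsilon>0$, which is finite. I also must not assume $f\ge 0$: the first term vanishes because of the factor $p$, not because of the sign of $f(\varepsilon)$.

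\textbf{Limit as $p\to 1^-$.} This case is a squeeze and needs no analysis of $g_f$. By Theorem~\ref{thm:scalar_reduction} the upper bound lies in $[1,1/p]$, so $1\le\overline{r}_{f,\delta}(p)\le 1/p\to 1$.

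The main obstacle I anticipate is bookkeeping at the boundary, not analysis. The upper bound must be read as the simplex-saturated value $1/p$ when $g_f(p,1/p)\le\delta$, so that the comparison argument applies uniformly whether or not a root exists. I will state this convention explicitly, consistent with the saturation remark in Figure~\ref{fig:probability_ratio}, and then apply the comparison argument in both cases.
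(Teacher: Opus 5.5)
Your proof is correct. For $p\to1^-$ it is the paper's squeeze $1\le\overline{r}_{f,\delta}(p)\le 1/p$. For the two $p\to0^+$ limits you take a different and more rigorous route.

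The paper argues at the moving root. Since $c(r)\to1$ for each fixed $r$, it drops the complement term, reads the binding equation as $f(\overline{r})\approx\delta/p$, and concludes $\overline{r}\to\infty$. For the lower bound it assumes $pf(0)\to0$, so all of $[0,1]$ is eventually feasible and $\underline{r}$ saturates at exactly $0$. You instead fix a target $R>1$ (or $\varepsilon\in(0,1)$) and show it becomes feasible because $g_f(p,R)\to0$. This uses only finiteness of $f$ on $(0,\infty)$ and continuity of $f$ at $1$.

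Your approach avoids three soft spots in the paper's version:
\begin{itemize}
\item The complement term vanishes for fixed $r$ but not along $\overline{r}(p)$. For TV it carries exactly $\delta/2$ at the root, so the heuristic predicts $1+2\delta/p$ instead of the true $1+\delta/p$.
\item The relation $f(\overline{r})\approx\delta/p$ presupposes that $f$ is unbounded above on $(1,\infty)$. A strictly convex generator need not be; for example $f(u)=1/u-1$ is bounded there, yet the limit still holds.
\item The lower-bound argument does not apply when $f(0)=+\infty$, in particular to the KL generator used in the experiments. Your $\varepsilon$-argument covers that case.
\end{itemize}
In return, the paper's version offers a heuristic growth rate. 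When $f(0)<\infty$ it also gives the sharper fact that $\underline{r}=0$ for all sufficiently small $p$, as for TV and $\chi^2$. Your argument gives only the qualitative limits.

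Finally, the saturation bookkeeping you flag disappears if you argue directly from Problems~\eqref{eq:upper_bound_opt} and~\eqref{eq:lower_bound_opt}. For $R<1/p$ with $g_f(p,R)\le\delta$, the uniformly rescaled $Q$ with $Q(a)=Rp$ lies in $\mathcal{T}_{f,\delta}(P)$. Hence $\overline{r}_{f,\delta}(p)\ge R$ whether or not a root exists, and this step needs no convexity.
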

\begin{proposition}[Strict Monotonicity of Band Bounds]
\label{prop:monotonicity}
The clipping bounds are strictly monotonic functions of the probability $p$. Specifically, given a $\delta > 0$, the upper bound $\overline{r}_{f,\delta}(p)$ is strictly decreasing with respect to $p$, while the lower bound $\underline{r}_{f,\delta}(p)$ is strictly increasing with respect to $p$.
\end{proposition}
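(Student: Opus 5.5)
The plan is a comparative-statics argument on the scalar equation of Theorem~\ref{thm:scalar_reduction}. Both bounds are defined implicitly as the two roots of $g_f(p,r)=\delta$, so their dependence on $p$ is governed by the signs of $\partial_r g_f$ and $\partial_p g_f$ at the roots. Theorem~\ref{thm:scalar_reduction} already settles the $r$-direction. Since $g_f(p,\cdot)$ is strictly convex with unique minimum $0$ at $r=1$, it is strictly increasing on $[1,1/p]$ and strictly decreasing on $[0,1]$. Hence $\partial_r g_f>0$ at $\overline{r}_{f,\delta}$ and $\partial_r g_f<0$ at $\underline{r}_{f,\delta}$. The main work is to show that, for every fixed $r\neq 1$, the map $p\mapsto g_f(p,r)$ is strictly increasing.

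For this key step, write $c(p,r)=\frac{1-rp}{1-p}$ and first assume $f$ is differentiable. A direct computation gives $\partial_p c=\frac{1-r}{(1-p)^2}$, and also $c-r=\frac{1-r}{1-p}$. Therefore
\begin{equation*}
\partial_p g_f(p,r)=f(r)-f(c)+f'(c)\,\frac{1-r}{1-p}=f(r)-f(c)-f'(c)\,(r-c),
\end{equation*}
which is exactly the Bregman divergence of $f$ between $r$ and $c$. By strict convexity this is strictly positive whenever $r\neq c$, that is, whenever $r\neq 1$. The implicit function theorem then gives
\begin{equation*}
\frac{d\overline{r}}{dp}=-\frac{\partial_p g_f}{\partial_r g_f}<0
\quad\text{and}\quad
\frac{d\underline{r}}{dp}>0 .
\end{equation*}

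To avoid assuming differentiability of $f$, I would instead run a derivative-free comparison argument. Take $p_1<p_2$ and set $r_1=\overline{r}_{f,\delta}(p_1)>1$. If $r_1>1/p_2$, then $\overline{r}_{f,\delta}(p_2)\le 1/p_2<r_1$ trivially. Otherwise the strict monotonicity of $g_f(\cdot,r_1)$ gives $g_f(p_2,r_1)>g_f(p_1,r_1)=\delta$. Since $g_f(p_2,\cdot)$ is continuous, increasing on $[1,1/p_2]$, and equal to $0$ at $1$, its $\delta$-root lies in $(1,r_1)$. The lower bound is handled symmetrically on $[0,1]$.

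The strict increase in $p$ itself can be obtained without derivatives by replacing $f'(c)$ with a subgradient and invoking the strict positivity of the generalized Bregman gap. An alternative route writes $(1-p)f(c)$ as the perspective $t\,f(x/t)$ evaluated along the affine path $(x,t)=(1-rp,\,1-p)$, and uses joint convexity together with the chord inequality.

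I expect the main obstacle to be the boundary and saturation cases rather than the sign computation. When $g_f(p,0)\le\delta$, as the caption of Figure~\ref{fig:probability_ratio} notes happens for TV and $\chi^2$, the lower bound is pinned at $0$. Likewise the upper bound may be pinned at the simplex limit $1/p$ when $g_f(p,1/p)\le\delta$. In the upper-saturated regime, strict decrease still holds, because $1/p$ is itself strictly decreasing. In the lower-saturated regime, however, $\underline{r}_{f,\delta}$ is constant at $0$. Strict increase of the lower bound therefore holds only on the non-saturated range of $p$, which by the monotonicity of $g_f(\cdot,0)$ is an interval $\{p: g_f(p,0)>\delta\}$. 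I would state the lower-bound claim on that range, and weakly overall, making this caveat explicit.
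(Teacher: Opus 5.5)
Your main argument is the paper's own. The paper also writes $\partial_p g_f = f(r)-f(c)-f'(c)(r-c)$ as a Bregman gap, reads off the sign of $\partial_r g_f = p\,[f'(r)-f'(c)]$ at the two roots, and concludes by the implicit function theorem.

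Your additions are worth keeping. The paper's proof tacitly assumes that $f$ is differentiable and that both bounds are interior roots of $g_f(p,r)=\delta$. Your comparison argument drops both assumptions and also covers the passage into and out of the saturated regime. You fix $p_1<p_2$, test $g_f(p_2,\cdot)$ at $r_1=\overline{r}_{f,\delta}(p_1)$, and use that $p\mapsto g_f(p,r)$ is strictly increasing. Your perspective/chord route proves that increase without derivatives, because $g_f(\cdot,r)$ is convex, vanishes at $p=0$, and is positive for $p>0$, $r\neq 1$.

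Your caveat is correct, and it corrects the statement itself rather than exposing a weakness in your proof. Whenever $g_f(p,0)\le\delta$, the lower bound is pinned at $0$. For TV, $\underline{r}_{\mathrm{TV},\delta}(p)=\max(1-\delta/p,0)=0$ for all $p\le\delta$, which is the flat segment visible in Figure~\ref{fig:probability_ratio}. So $\underline{r}_{f,\delta}$ is strictly increasing only on $\{p: g_f(p,0)>\delta\}$, and only weakly increasing on all of $(0,1)$. The upper bound stays strictly decreasing throughout, since $1/p$ is. For the KL generator used in the experiments, $g_f(p,0)=g_f(p,1/p)=+\infty$, so neither bound ever saturates and the proposition holds exactly as stated.
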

\paragraph{Resolving the Exploration Bottleneck.}
To demonstrate the constraint behaviors, Figures~\ref{fig:probability_variation} and~\ref{fig:probability_ratio} visualize the bounds for probability variation and ratios, respectively. We contrast the Theoretical Simplex, DAPO, and DCPO against BandPO, instantiated with KL, Pearson $\chi^2$, and TV divergences.
As illustrated, the ratio upper bounds of probability-aware methods (DCPO and BandPO) decrease, while the lower bounds increase with respect to $p$.
Furthermore, their upper ratio bounds expand rapidly as $p \to 0$, which prevents the premature suppression of low-probability actions with positive advantages, thereby preserving effective gradients and facilitating the exploration of valuable tail strategies.
For $\mathrm{Band}_{f,0.1}$, the KL variant permits a broader ratio range compared to the TV and $\chi^2$ variants.
Notably, TV and $\chi^2$ correctly trigger the simplex saturation condition at low probabilities, where the lower bound explicitly aligns with the theoretical simplex limit of $0$.
Crucially, all BandPO variants maintain strict geometric consistency with the probability simplex $\Delta^V$, underscoring the method's theoretical soundness. 
In contrast, heuristic approaches like DCPO and DAPO violate the theoretical upper bound in high-probability regimes.
Ultimately, while BandPO and DCPO successfully mitigate the exploration bottleneck, BandPO offers a more rigorous foundation, guaranteeing valid constraints governed by an interpretable parameter $\delta$.

\paragraph{Resolving the Exploration Bottleneck.}
Figures~\ref{fig:probability_variation} and~\ref{fig:probability_ratio} compare BandPO (KL, $\chi^2$, TV) against baselines. Both BandPO and DCPO expand ratio bounds as $p \to 0$, preventing premature suppression of tail actions to facilitate exploration. However, unlike heuristics (DCPO, DAPO) that violate theoretical limits at high probabilities, all BandPO variants maintain strict geometric consistency with the simplex $\Delta^V$. Specifically, BandPO-KL yields the broadest range, while TV and $\chi^2$ correctly capture the simplex saturation at $0$. Thus, BandPO uniquely resolves the exploration bottleneck with a rigorous foundation, guaranteeing valid constraints unlike heuristic approximations.

\subsection{Solving Band Bounds}
\label{subsec:solving}
For a chosen function $f$ and radius $\delta$, given the probability $p = P(a)$, we solve Problems~\eqref{eq:r_start_lower} and~\eqref{eq:r_start_upper} to derive the probability ratio bounds $\underline{r}_{f,\delta}(a; P)$ and $\overline{r}_{f,\delta}(a; P)$.

\paragraph{Simplex Saturation.}
For a sufficiently large radius $\delta$, the divergence constraint may extend beyond the boundaries of the probability simplex $\Delta^V$. This creates a conflict between the simplex boundaries and the divergence constraint, rendering Problems~\eqref{eq:r_start_lower} and~\eqref{eq:r_start_upper} ill-defined. We term this phenomenon \textbf{Simplex Saturation}. To enforce the simplex constraint, we formalize this saturation condition and align the Band bounds with the simplex limits, with details deferred to Appendix~\ref{app:proof_prop_saturation}.
\begin{proposition}[Constraint Saturation]
\label{prop:saturation}
Based on Inequality~\eqref{eq:theory_ratio_range}, denote simplex bounds as $r_{\max} \triangleq 1/p$ and $r_{\min} \triangleq 0$.
The optimal Band upper bound $\overline{r}_{f,\delta}(p)$ is given by:
\begin{equation}
\overline{r}_{f,\delta}(p) =
\begin{cases}
r_{\max}, & \text{if } g_f(p, r_{\max}) \le \delta, \\
r^{\dagger}, & \text{otherwise,}
\end{cases}
\end{equation}
where $r^{\dagger}$ denotes the unique root of $g_f(p, r) = \delta$ in $(1, r_{\max})$, which is guaranteed to exist in the non-saturated regime.
Symmetrically, $\underline{r}_{f,\delta}(p) = 0$ if $g_f(p, r_{\min}) \le \delta$; otherwise, it is the unique root of $g_f(p, r) = \delta$ in $(0, 1)$.
\end{proposition}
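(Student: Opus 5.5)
By Lemma~\ref{lem:complement_rescaling} and Theorem~\ref{thm:scalar_reduction}, the extremal problems reduce to optimizing the scalar ratio $r$ over the feasible set
\[
S \triangleq \{ r \in [0, 1/p] \mid g_f(p,r) \le \delta \}.
\]
The endpoints $r\in\{0,1/p\}$ are admissible because the simplex allows $Q(a)=0$ or $Q(a)=1$, which correspond to $c(r)=1/(1-p)$ or $c(r)=0$. At these points we interpret $f(0)$ by its limit $f(0^+)\in\mathbb{R}\cup\{+\infty\}$. The plan is to show that $S$ is a closed interval containing $1$, and then read off its endpoints in the two regimes. Throughout, we use only the facts from Theorem~\ref{thm:scalar_reduction} that $r\mapsto g_f(p,r)$ is strictly convex on $[0,1/p]$ with unique minimum $g_f(p,1)=0<\delta$.

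\textbf{Structure of $S$.} Since $g_f(p,\cdot)$ is convex, its sublevel set $S$ is an interval containing $r=1$. Strict convexity with minimum at $1$ gives strict monotonicity on each side: $g_f(p,\cdot)$ is strictly increasing on $[1,1/p]$ and strictly decreasing on $[0,1]$. We also check lower semicontinuity at the endpoints, which holds because $f$ is convex and extended by its limit. Hence $S=[\underline{r},\overline{r}]$ is closed, and the Band bounds are $\overline{r}_{f,\delta}=\max S$ and $\underline{r}_{f,\delta}=\min S$.

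\textbf{Upper bound, two cases.}
\begin{itemize}
\item \emph{Saturated case}, $g_f(p,r_{\max})\le\delta$. Monotonicity on $[1,r_{\max}]$ gives $g_f(p,r)\le g_f(p,r_{\max})\le\delta$ for all $r\in[1,r_{\max}]$. So the whole segment is feasible and $\max S=r_{\max}$.
\item \emph{Non-saturated case}, $g_f(p,r_{\max})>\delta$. The function $h(r)=g_f(p,r)-\delta$ is continuous on $[1,r_{\max})$, since a finite convex function is continuous on the interior. We have $h(1)=-\delta<0$, and $h(r)\to g_f(p,r_{\max})-\delta>0$ as $r\to r_{\max}^-$; this uses continuity of the convex extension at the endpoint, or divergence to $+\infty$ when $f(0^+)=+\infty$. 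The intermediate value theorem then yields a root $r^\dagger\in(1,r_{\max})$, and strict monotonicity makes it unique. Feasibility holds exactly on $[1,r^\dagger]$, so $\max S=r^\dagger$.
\end{itemize}

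\textbf{Lower bound.} This is the mirror image, using monotone decrease on $[0,1]$ with $r_{\min}=0$, $g_f(p,0)=pf(0^+)+(1-p)f(1/(1-p))$. If $g_f(p,0)\le\delta$, every $r\in[0,1]$ is feasible and the bound is $0$. Otherwise there is a unique root in $(0,1)$, by the same argument.

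\textbf{Main obstacle.} The delicate step is the endpoint behavior, namely continuity or lower semicontinuity of $g_f$ at $r=0$ and $r=1/p$, where one argument of $f$ hits $0$. For instance, in the KL case $f(0^+)=+\infty$, so the saturated branch never triggers; for TV and $\chi^2$ the value is finite. I would handle this by invoking the monotone limit of a convex function at the boundary of its domain, $\lim_{u\to0^+}f(u)$, which exists in $(-\infty,+\infty]$. This limit gives the required one-sided continuity of $g_f$ at both endpoints, and that is what the intermediate value argument needs.
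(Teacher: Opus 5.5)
Your proposal is correct and follows the same route as the paper. Both arguments use strict monotonicity of $g_f(p,\cdot)$ on each side of $r=1$, full feasibility (so saturation at $r_{\max}$ or $r_{\min}$) when the endpoint value is at most $\delta$, and otherwise the intermediate value theorem plus strict monotonicity for a unique interior root. Your extra step of defining the endpoint values through $f(0^+)\in(-\infty,+\infty]$ justifies the continuity of $g_f$ on $[1,r_{\max}]$, which the paper only asserts. This matters for KL, where $g_f(p,r_{\max})=+\infty$.
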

\textbf{Generic Numerical Solver.}
In the active regime, the optimal bounds correspond strictly to the unique roots of the binding equation $g_f(p, r) = \delta$.
These roots are well-separated by the singularity at $r=1$.
Consequently, they can be efficiently computed via standard bracketed root-finding algorithms (e.g., Bisection or Brent's method), which are guaranteed to converge to the global optimum.
We provide the rigorous convergence analysis and the standard solver formulation in Appendix~\ref{app:numerical_solver}, along with a concrete instantiation for the KL-divergence.
\textbf{Closed-Form Solutions.}
Specific $f$-divergences admit closed-form solutions to Problems~\eqref{eq:r_start_lower} and~\eqref{eq:r_start_upper}, offering superior computational efficiency compared to numerical methods. We provide bounds for TV and Pearson $\chi^2$ divergence below, with derivations detailed in Appendix~\ref{app:proof_prop_closedform}.
\begin{proposition}[Closed-Form Band Bounds for TV and Pearson $\chi^2$]
\label{prop:closedform}
Consider a reference probability $p \in (0,1)$ and a trust-region radius $\delta > 0$.
Defining the generator functions as $f_{\mathrm{TV}}(u)=\frac{1}{2}\lvert u-1\rvert$ and $f_{\chi^2}(u)=(u-1)^2$, the closed-form clipping bounds are derived as follows.
For TV, the bounds scale linearly with the inverse probability:
\begin{equation}
\label{eq:tv_bounds}
    \overline{r}_{\mathrm{TV},\delta}(p) = 1 + \frac{\delta}{p},
    \qquad
    \underline{r}_{\mathrm{TV},\delta}(p) = 1 - \frac{\delta}{p}.
\end{equation}

For Pearson $\chi^2$, the bounds scale with the inverse square root of the odds:
\begin{equation}
\label{eq:chi2_bounds}
    \overline{r}_{\chi^2,\delta}(p) = 1 + \sqrt{\frac{\delta (1-p)}{p}}, \;
    \underline{r}_{\chi^2,\delta}(p) = 1 - \sqrt{\frac{\delta (1-p)}{p}}.
\end{equation}
\end{proposition}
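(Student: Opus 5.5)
We apply Theorem~\ref{thm:scalar_reduction}: the bounds are the two roots of $g_f(p,r)=\delta$ on either side of $r=1$. Proposition~\ref{prop:saturation} then truncates them to the simplex limits $[0,1/p]$ where needed. The plan is to substitute each generator into the univariate function $g_f(p,r)$, simplify it to an elementary expression in $|r-1|$, and solve $g_f(p,r)=\delta$ explicitly.

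The key algebraic identity is that the complement ratio satisfies
\begin{equation*}
c(r)-1=\frac{1-rp}{1-p}-1=\frac{p(1-r)}{1-p}.
\end{equation*}
\textbf{TV case.} Substituting $f_{\mathrm{TV}}$ gives
\begin{equation*}
g_{\mathrm{TV}}(p,r)=\tfrac{p}{2}|r-1|+\tfrac{1-p}{2}\cdot\tfrac{p|r-1|}{1-p}=p|r-1|.
\end{equation*}
Setting $p|r-1|=\delta$ yields $r=1\pm\delta/p$, the larger root for the upper bound and the smaller for the lower bound.

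\textbf{Pearson $\chi^2$ case.} Similarly,
\begin{equation*}
g_{\chi^2}(p,r)=p(r-1)^2+(1-p)\frac{p^2(r-1)^2}{(1-p)^2}=(r-1)^2\,\frac{p}{1-p}.
\end{equation*}
Solving $(r-1)^2 p/(1-p)=\delta$ gives $r=1\pm\sqrt{\delta(1-p)/p}$.

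In each case the root is unique on each side of $1$ because $g$ is a monotone function of $|r-1|$ there, consistent with Theorem~\ref{thm:scalar_reduction}.

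\textbf{The TV generator is not strictly convex.} This is the main subtlety, since Theorem~\ref{thm:scalar_reduction} and Lemma~\ref{lem:complement_rescaling} assume strict convexity. For TV, the uniform rescaling of the complement may no longer be the unique optimizer. I would argue directly instead. For any $Q$, $D_{\mathrm{TV}}(Q\|P)=\tfrac12\sum_v|Q(v)-P(v)|$. Since the complement must absorb the mass $Q(a)-P(a)$, we have
\begin{equation*}
\sum_{b\neq a}|Q(b)-P(b)|\ge |Q(a)-P(a)|,
\end{equation*}
and hence $D_{\mathrm{TV}}\ge |Q(a)-P(a)|=p|r-1|$. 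Uniform rescaling attains this with equality, so the extremal ratios are still $1\pm\delta/p$, even though the optimizer need not be unique. The $\chi^2$ case is strictly convex, so the theorem applies directly.

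\textbf{Saturation.} The stated formulas are the active-regime roots. When $1-\delta/p<0$ (TV) or $1-\sqrt{\delta(1-p)/p}<0$ ($\chi^2$), or when the upper root exceeds $1/p$, Proposition~\ref{prop:saturation} clips the bounds to $0$ or $1/p$ respectively. I would state this explicitly as the implicit convention for reading the closed forms.
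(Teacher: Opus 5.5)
Your proposal is correct and follows the paper's proof essentially step for step: substitute each generator into $g_f(p,r)$, use $c(r)-1=p(1-r)/(1-p)$ to collapse it to $p|r-1|$ and $\frac{p}{1-p}(r-1)^2$, solve for the two roots, and clamp to $[0,1/p]$ via Proposition~\ref{prop:saturation} (the paper makes the same clamp in its closing ``practical implementation'' remark). Your direct triangle-inequality bound $D_{\mathrm{TV}}(Q\|P)\ge |Q(a)-P(a)|=p|r-1|$, with equality under uniform rescaling, is a worthwhile addition the paper lacks: the paper simply invokes Theorem~\ref{thm:scalar_reduction}, whose hypotheses (strict convexity, and the derivatives used in its proof) do not actually cover $f_{\mathrm{TV}}$.
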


\subsection{BandPO: Band-Constrained Policy Optimization}
\label{subsec:bandpo}
We propose \textbf{BandPO}, a policy optimization framework that substitutes the canonical clipping mechanism of GRPO with the theoretically rigorous \textbf{Band} operator. 
Formally, BandPO maximizes the objective $\mathcal{J}_{\text{BandPO}}(\theta)$:
\begin{equation}
\label{eq:bandpo_full}
 \mathcal{J}_{\text{BandPO}}(\theta) = \mathbb{E}_{x \sim \mathcal{D}, \{y_i\}_{i=1}^G \sim \pi_{\text{old}}} \left[ \frac{1}{G} \sum_{i=1}^G \frac{1}{T_i} \sum_{t=1}^{T_i} \mathcal{J}^{\text{Band}}_t(\theta; y_i) \right].
\end{equation}
By employing the probability-aware clipping interval derived in Equation~\eqref{eq:band_operator}, the core per-token surrogate objective $\mathcal{J}^{\text{Band}}_t(\theta; y_i)$ is formulated as:
\begin{equation}
\label{eq:bandpo_token_loss}
\begin{split}
    \mathcal{J}^{\text{Band}}_t(\theta; y_i) = \min \big( r_{t,i} A_{t,i}, {\mathrm{Band}_{f,\delta}\big(r_{t,i};\,y_{t,i}, {\pi_{\text{old}}(\cdot|s_{t,i})}\big)} A_{t,i} \big) - \beta D_{\mathrm{KL}}(\pi_{\text{ref}} \| \pi_\theta)_t,
\end{split}
\end{equation}
where $r_{t,i} = \frac{\pi_\theta(y_{t,i}|s_{t,i})}{\pi_{\text{old}}(y_{t,i}|s_{t,i})}$ and the advantage $A_{t,i}$ is computed as described in Section~\ref{subsec:clipRL}.
By projecting the trust region induced by $f$-divergence into a probability-aware scalar interval specific to the target action $y_{t,i}$, the Band operator strictly confines the ratio within a theoretically feasible domain, thereby balancing optimization stability with the effective exploration of tail strategies.

\section{Empirical Study}
\subsection{Experimental Setup}
\textbf{Models and Datasets.} We employ a composite training set merging DAPO~\citep{DAPO} with MATH Levels 3--5~\citep{MATH} to fine-tune four models spanning diverse architectures and scales: Qwen2.5-3B-Instruct~\citep{Qwen2_5} and the DeepSeek-R1-Distill family (Qwen-1.5B/7B, Llama-8B)~\citep{DeepSeek_R1}. To evaluate reasoning robustness across varying difficulty levels, we validate on AMC 2023~\citep{AMC2023}, AIME 2024~\citep{MAA2024}, and AIME 2025~\citep{MAA2025}.

\textbf{Implementation Details.} We conduct all experiments on $8\times$ NVIDIA H200 GPUs using the \texttt{verl} framework~\citep{VERL}. We train models for 800 steps (for 1.5B/3B) or 500 steps (for 7B/8B), utilizing a global batch size of 256 (mini-batch 64, micro-batch 8) and a learning rate of $1 \times 10^{-6}$. During generation, we set both the sampling temperature and top-$p$ to $1.0$.
All experiments are repeated three times to ensure statistical significance.
Regarding clipping bounds, we adopt asymmetric thresholds with $\epsilon_+ = 0.28$ and $\epsilon_- = 0.2$. For BandPO, we enforce the KL divergence as the trust region constraint with $\delta=0.05$. Crucially, we implement a CUDA-accelerated parallel bisection method to efficiently solve for the Band bounds.

\textbf{Baselines and Metrics.} 
To isolate the impact of clipping mechanisms, we implement comparisons within the GRPO framework~\citep{DeepSeek_Math}. We establish two primary baselines: (1) \textbf{GRPO}, representing canonical symmetric clipping~\citep{PPO}; and (2) \textbf{GRPO w/ Clip-Higher}, representing the state-of-the-art asymmetric clipping introduced by DAPO~\citep{DAPO}. 
For our method (BandPO), (3) \textbf{GRPO w/ $\text{Band}_{\text{KL},\delta}$}, we replace the clip operator with the Band operator. 
Evaluation metrics include \textbf{pass@32} to measure peak reasoning capability (probability of at least one correct solution) and \textbf{mean@32} to quantify the expected policy robustness across 32 samples.

\begin{table}[t]
\centering
\small
\caption{\textbf{Reasoning performance comparison across model scales (1.5B/3B/7B/8B).} We report mean@32 and pass@32 (\%) on AMC/AIME. \textcolor{red}{Red} marks the best average. BandPO consistently outperforms vanilla GRPO and heuristic variants (\textit{Clip-Higher}).}
\label{tab:main_results}
\arrayrulecolor[rgb]{0.753,0.753,0.753}
\setlength{\tabcolsep}{0pt} 
\begin{tabular*}{\linewidth}{l@{\extracolsep{\fill}}cccc} 
\arrayrulecolor{black}\hline
\multicolumn{1}{c}{\multirow{2}{*}{\textbf{Method}}} & \textbf{AMC2023} & \textbf{AIME2024} & \textbf{AIME2025} & \textbf{Average} \\ 
\arrayrulecolor[rgb]{0.502,0.502,0.502}\cline{2-5}
\multicolumn{1}{c}{} & mean@32/pass@32 & mean@32/pass@32 & mean@32/pass@32 & mean@32/pass@32 \\ 
\arrayrulecolor{black}\hline
\multicolumn{5}{c}{{\cellcolor[rgb]{0.753,0.753,0.753}}\textbf{DeepSeek-R1-Distill-Qwen-1.5B (800 steps)}} \\
GRPO & 72.11 / 94.31 & 18.13 / 39.00 & 21.88 / 38.89 & 37.37 / 57.40 \\
GRPO w/ Clip-Higher & 77.03 / 94.98 & 18.23 / 41.09 & 23.12 / 40.16 & 39.46 / 58.74 \\ 
\arrayrulecolor[rgb]{0.753,0.753,0.753}\hline
GRPO w/ Relaxed $\text{Band}_{\text{KL},0.05}$ & 74.69 / 93.77 & 19.69 / 43.28 & 23.54 / 38.84 & 39.31 / 58.63 \\
GRPO w/ $\text{Band}_{\text{KL},0.05}$ & \textbf{77.34} / \textbf{94.98} & \textbf{20.00} / \textbf{51.80} & \textbf{23.85} / \textbf{40.65} & \textbf{\textcolor{red}{40.40}} / \textbf{\textcolor{red}{62.48}} \\ 
\arrayrulecolor{black}\hline
\multicolumn{5}{c}{{\cellcolor[rgb]{0.753,0.753,0.753}}\textbf{Qwen2.5-3B-Instruct (800 steps)}} \\
GRPO & 45.94 / 77.33 & 3.54 / 11.68 & 3.23 / 8.79 & 17.57 / 32.60 \\
GRPO w/ Clip-Higher & 52.66 / 82.91 & 4.69 / 14.95 & 4.06 / 23.93 & 20.47 / 40.60 \\ 
\arrayrulecolor[rgb]{0.753,0.753,0.753}\hline
GRPO w/ Relaxed $\text{Band}_{\text{KL},0.05}$ & 52.97 / 87.05 & 4.58 / \textbf{15.11} & 4.06 / 21.00 & 20.54 / 41.05 \\
GRPO w/ $\text{Band}_{\text{KL},0.03}$ & 52.81 / \textbf{87.84} & 4.27 / 10.00 & 4.06 / 22.40 & 20.38 / 40.08 \\
GRPO w/ $\text{Band}_{\text{KL},0.10}$ & 51.41 / 84.77 & 3.54 / 14.31 & 6.04 / 20.85 & 20.33 / 39.98 \\
GRPO w/ $\text{Band}_{\text{KL},0.05}$ & \textbf{55.17} / 87.55 & \textbf{4.79} / 14.21 & \textbf{6.04} / \textbf{24.28} & \textbf{\textcolor{red}{22.00}} / \textbf{\textcolor{red}{42.01}} \\ 
\arrayrulecolor{black}\hline
\multicolumn{5}{c}{{\cellcolor[rgb]{0.753,0.753,0.753}}\textbf{DeepSeek-R1-Distill-Qwen-7B (500 steps)}} \\
GRPO & 87.11 / 95.00 & 27.29 / 49.71 & 32.71 / 55.62 & 49.04 / 66.78 \\
GRPO w/ Clip-Higher & 87.50 / 95.00 & 26.77 / 48.11 & 30.83 / 56.96 & 48.37 / 66.69 \\ 
\arrayrulecolor[rgb]{0.753,0.753,0.753}\hline
GRPO w/ Relaxed $\text{Band}_{\text{KL},0.05}$ & 88.58 / 95.00 & 29.69 / 50.78 & 33.44 / 54.52 & 50.57 / 66.77 \\
GRPO w/ $\text{Band}_{\text{KL},0.03}$ & 88.28 / 95.00 & 29.17 / \textbf{51.58} & 32.71 / \textbf{61.46} & 50.05 / \textbf{\textcolor{red}{69.35}} \\
GRPO w/ $\text{Band}_{\text{KL},0.10}$ & 89.69 / 95.00 & 29.79 / 47.19 & 32.29 / 52.03 & 50.59 / 64.74 \\
GRPO w/ $\text{Band}_{\text{KL},0.05}$ & \textbf{89.84} / \textbf{95.00} & \textbf{29.90} / 49.14 & \textbf{34.58} / 57.21 & \textbf{\textcolor{red}{51.44}} / 67.12 \\ 
\arrayrulecolor{black}\hline
\multicolumn{5}{c}{{\cellcolor[rgb]{0.753,0.753,0.753}}\textbf{DeepSeek-R1-Distill-Llama-8B (500 steps)}} \\
GRPO & 85.47 / 94.11 & 23.23 / 46.00 & 23.54 / 54.80 & 44.08 / 64.97 \\
GRPO w/ Clip-Higher & 86.79 / 94.13 & 24.58 / 46.49 & 28.33 / \textbf{61.86} & 46.57 / 67.49 \\ 
\arrayrulecolor[rgb]{0.753,0.753,0.753}\hline
GRPO w/ Relaxed $\text{Band}_{\text{KL},0.05}$ & 86.72 / 94.67 & 25.00 / 49.82 & 29.79 / 53.55 & 47.17 / 66.01 \\
GRPO w/ $\text{Band}_{\text{KL},0.05}$ & \textbf{87.03} / \textbf{95.00} & \textbf{25.31} / \textbf{51.21} & \textbf{29.90} / 57.61 & \textbf{\textcolor{red}{47.41}} / \textbf{\textcolor{red}{67.94}} \\
\arrayrulecolor{black}\hline
\end{tabular*}
\vspace{-1em}
\end{table}

\subsection{Main Results}
As presented in Table~\ref{tab:main_results}, BandPO consistently outperforms all baselines in \textbf{mean@32} across diverse models and datasets, demonstrating superior exploitation capabilities. 
Specifically, BandPO improves mean@32 by at least 2.0 points over GRPO across all settings, with a notable gain of $\sim$10 points on the Qwen2.5-3B AMC2023 task. 
In contrast, baseline methods exhibit significant instability: \textit{GRPO w/ Clip-Higher} suffers from performance regression on the 7B AIME benchmarks (dropping 1--2 points), and the 1.5B GRPO baseline exhibits severe instability, consistently collapsing around training step 340 (verified across multiple runs).
Crucially, BandPO substantially lifts \textbf{pass@32} (e.g., a 28.9\% relative gain on 3B) and consistently surpasses heuristic baselines like Clip-Higher and Relaxed-Band. 
Overall, these results confirm that the probability-aware constraints provided by Band offer a superior exploration-exploitation trade-off compared to heuristic clipping bounds of the probability ratio.

\subsection{Relaxing Band Bounds Degrades Performance}
\label{sec:relaxed_band_analysis}
As illustrated in Figure~\ref{fig:3d-vis-compare}, while BandPO significantly expands the bounds for low-probability actions, it imposes tighter constraints in the high-probability regime compared to fixed asymmetric bounds (DAPO), as highlighted by the green region.
To investigate whether directly expanding the Band bounds yields further improvements, we evaluate \textbf{GRPO w/ Relaxed $\text{Band}_{\text{KL},\text{0.05}}$}, which heuristically aligns the high-probability bound with Clip-Higher. 
Results in Table~\ref{tab:main_results} demonstrate that this heuristic relaxation leads to consistent overall performance degradation.
For larger models (7B/8B), relaxing the bounds causes a slight decline in overall performance ($\sim$0.5 decrease in average mean@32) and a notable drop of approximately 3 points in pass@32 on AIME 2025.
This detrimental effect is exacerbated in smaller models (1.5B/3B); specifically, the 1.5B model suffers a drop of nearly 8 points in pass@32 on AIME 2024, while the 3B model decreases by roughly 3 points in mean@32 on AMC 2023.
These findings underscore that directly relaxing Band to fully cover the Clip-Higher range is detrimental to performance.
This suggests that clipping bounds should not be altered heuristically but should stem from rigorous theoretical derivation, underscoring the importance of BandPO's theoretically grounded formulation.

\subsection{Radius $\delta$ Matters More for Smaller Backbones}
\label{sec:delta_radius}
While canonical clipping typically employs fixed heuristic thresholds (e.g., $\epsilon_{-} = 0.2, \epsilon_{+} = 0.28$) across varying model scales, BandPO consolidates these constraints into a single, theoretically grounded hyperparameter: the trust region radius $\delta$.
To investigate the impact of $\delta$ on performance and stability, we evaluate the KL-instantiated BandPO, varying $\delta \in \{0.03, 0.05, 0.10\}$ on Qwen2.5-3B and Qwen2.5-7B backbones.
As evidenced in Table~\ref{tab:main_results}, setting $\delta=0.05$ consistently yields superior performance in terms of \textbf{mean@32} for both model sizes compared to tighter ($\delta=0.03$) or looser ($\delta=0.10$) constraints.
This finding contradicts the intuition that larger $\delta$ monotonically improves performance. 
Instead, it highlights a critical trade-off: while overly strict constraints limit the policy's ability to learn from effective exploration, excessively loose constraints risk destabilizing the update, leading to suboptimal convergence.
Crucially, our analysis reveals that smaller models exhibit higher sensitivity to the trust-region radius.
For the 3B model, the optimal $\delta=0.05$ outperforms the suboptimal settings by approximately 10\% in mean@32 and 5\% in pass@32.
In contrast, the 7B model demonstrates greater robustness, with performance fluctuations across the three $\delta$ settings confined to a narrow margin of 2--3\%.
This disparity suggests that larger models possess a more resilient optimization landscape, capable of tolerating rougher updates, whereas smaller models require precise trust-region management to prevent performance degradation.
Based on these empirical findings, we recommend $\delta=0.05$ as a robust default starting point for practical implementations of $\text{Band}_{\text{KL},\delta}$.

\begin{figure}[t]
    \setlength{\belowcaptionskip}{-10pt}  
    \centering
    \begin{subfigure}[b]{0.8\linewidth}
        \centering
        \setlength{\abovecaptionskip}{1pt} 
        \setlength{\belowcaptionskip}{2pt} 
        \includegraphics[width=\linewidth]{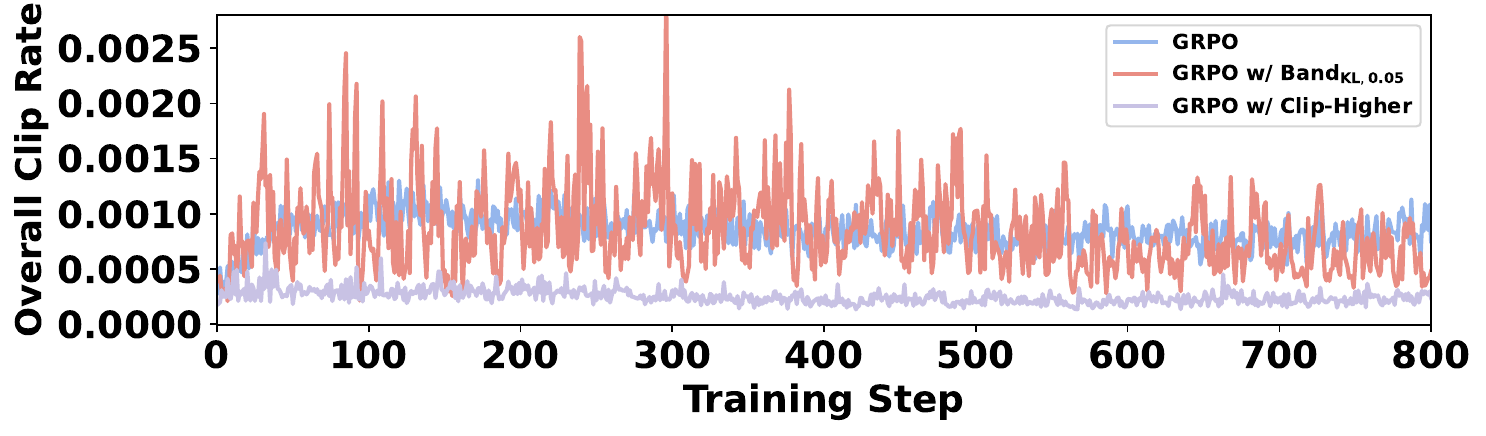}
        \caption{\textbf{Overall clip rate across training steps.}}
        \label{fig:sub1}
    \end{subfigure}
    \begin{subfigure}[b]{0.8\linewidth}
        \centering
        \setlength{\abovecaptionskip}{1pt} 
        \setlength{\belowcaptionskip}{2pt} 
        \includegraphics[width=\linewidth]{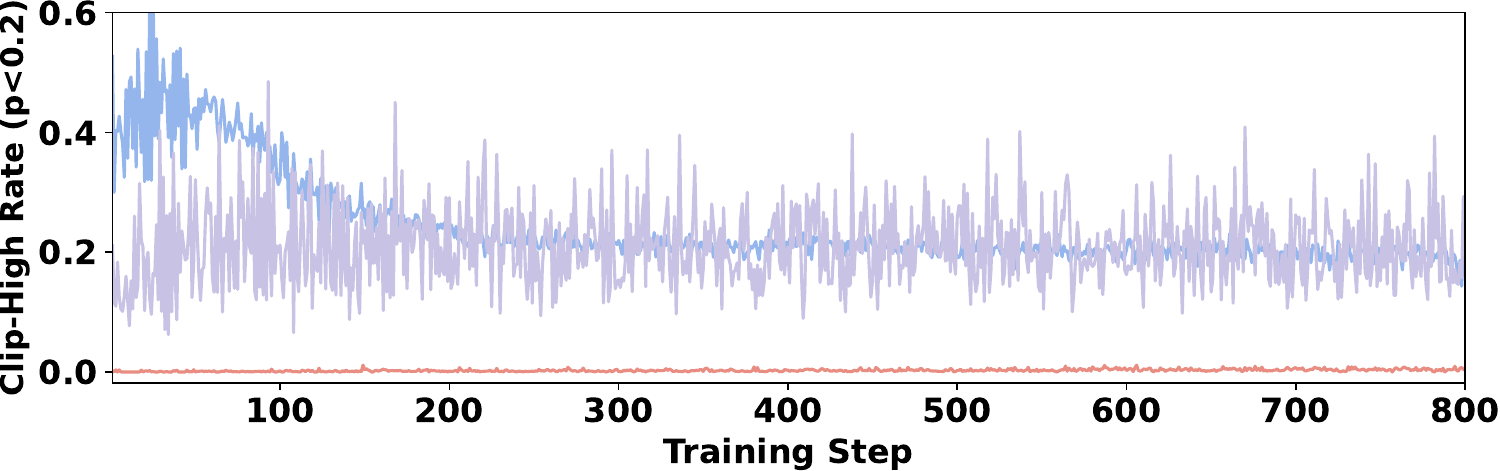}
        \caption{\textbf{Clip-High rate for low-prob tokens ($p < 0.2$).}}
        \label{fig:sub2}
    \end{subfigure}
    \begin{subfigure}[b]{0.8\linewidth}
        \centering
        \setlength{\abovecaptionskip}{1pt} 
        \setlength{\belowcaptionskip}{2pt} 
        \includegraphics[width=\linewidth]{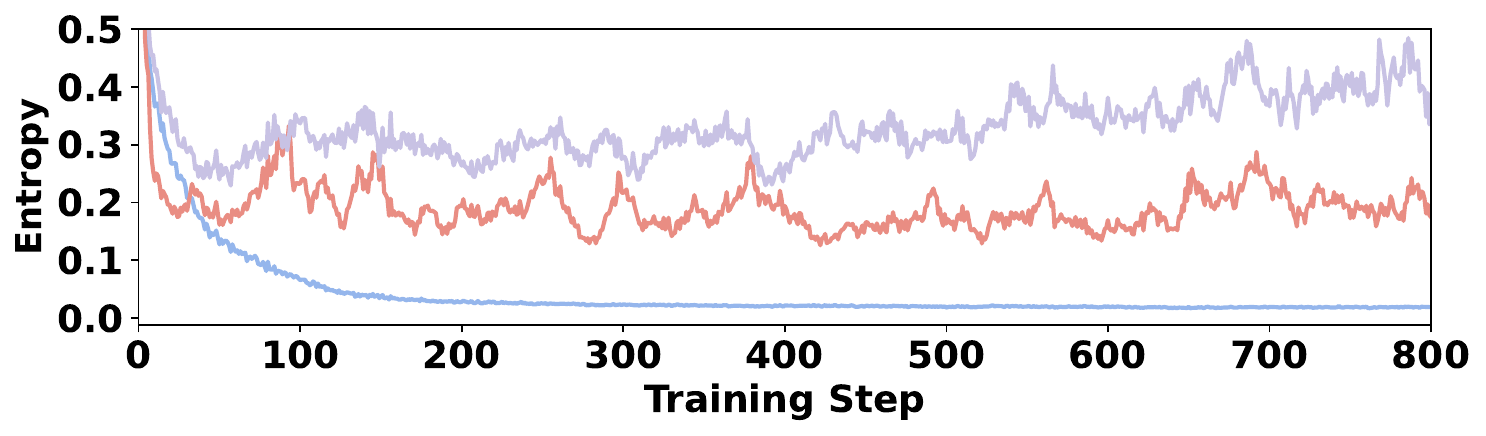}
        \caption{\textbf{Policy entropy evolution.}}
        \label{fig:sub3}
    \end{subfigure}
    \caption{\textbf{Comparison of training dynamics.} (a) Overall clip rate measuring the fraction of clipped tokens relative to total tokens per update. (b) Proportion of clip-high for low-probability tokens ($p < 0.2$) relative to total clipped tokens, identifying erroneous tail-action suppression. (c) Evolution of policy entropy measuring the concentration of action distributions. Our method (red) effectively prevents mode collapse by mitigating the vanishing margin issue in (b).}
    \label{fig:analysis_dynamics}
\end{figure}

\subsection{BandPO Unlocks Exploration for Tail Actions}
\label{sec:analysis_tail}
The theoretical analysis in Section~\ref{subsec:properties} posits that BandPO resolves the exploration bottleneck via the \textbf{Band} operator.
To empirically validate this mechanism, we analyze the training dynamics of the Band constraint (\textbf{GRPO w/ $\text{Band}_{\text{KL}, 0.05}$}) compared to the canonical clipping constraint (\textbf{GRPO}) and the Clip-Higher constraint (\textbf{GRPO w/ Clip-Higher}) on the Qwen2.5-3B-Instruct backbone.
As shown in Figure~\ref{fig:sub1}, the overall clip rate of Band aligns closely with that of canonical clipping, whereas Clip-Higher exhibits a significantly lower rate ($\sim$50\% reduction).
This contrast is expected; as visualized in Figure~\ref{fig:3d-vis-compare}, Band imposes stricter constraints than Clip-Higher (DAPO) in the high-probability regime ($p > 0.8$).

However, aggregate statistics mask a critical distributional shift.
Figure~\ref{fig:sub2} isolates the incidence of upper-bound clipping (clip-high) specifically for low-probability actions ($p < 0.2$).
A distinct contrast emerges: for fixed-bound methods (canonical clipping and Clip-Higher), the upper-bound constraint on tail tokens alone accounts for approximately 20\% of the total clipping volume.
In the early training phase (first 50 steps) of canonical clipping, this proportion surges to 60\%, corresponding to the rapid entropy collapse observed in Figure~\ref{fig:sub3}.
These findings empirically confirm the bottleneck identified in Section~\ref{sec:bottlenecks}: fixed bounds prematurely suppress positive-advantage updates for tail actions due to the vanishing upward margin.

In sharp contrast, Band reduces the clipping incidence for tail actions to nearly zero. By dynamically expanding the upper bound as $p \to 0$, Band ensures that low-probability actions with positive advantages are not prematurely discarded but are instead preserved to contribute effective gradients. 
Crucially, while Band maintains a high aggregate clip rate comparable to canonical clipping (Figure~\ref{fig:sub1}), its entropy evolution mirrors the robust stability of Clip-Higher (Figure~\ref{fig:sub3}).
By substituting the canonical clipping operator with Band, BandPO effectively averts the early-stage entropy collapse characteristic of GRPO, eventually converging to a mean entropy an order of magnitude higher ($0.2$ vs. $0.02$).
This result yields a pivotal insight: optimization stability relies not merely on reducing the total volume of clipping, but on rigorously redistributing the clipping budget. BandPO succeeds by strategically reallocating the permissible update margin—loosening constraints on tail actions to facilitate exploration while tightening them on head actions to enforce stability.

\section{Discussion}
\label{sec:discussion}
In this work, we introduced BandPO, a principled framework that realigns the canonical proximal clipping mechanism with the geometric constraints of trust regions. By projecting $f$-divergence constraints into dynamic, probability-aware intervals, BandPO resolves the exploration bottleneck for tail strategies while maintaining strict simplex consistency. Below, we discuss the implications of our findings, computational considerations, and future directions.

\paragraph{Limitations.} 
\textbf{Computational Overhead of Numerical Solvers.} 
Unlike the computationally trivial canonical clipping mechanism, BandPO involves solving convex constraint equations, which inherently introduces numerical complexity. 
While our derived closed-form solutions for divergences such as TV and Pearson $\chi^2$ (Prop.~\ref{prop:closedform}) maintain analytical efficiency, divergences lacking closed-form inverses—most notably the KL-divergence—necessitate the use of iterative root-finding algorithms.
Consequently, applying BandPO with KL-divergence relies on numerical solvers, which inevitably incurs additional computational latency and minor approximation errors compared to elementary scalar operations.
To mitigate this in latency-critical deployment scenarios, we highlight that the strict monotonicity of Band bounds permits the pre-computation of high-precision lookup tables, effectively reducing the runtime complexity to $O(1)$ memory access.
\textbf{Static Trust Region Assumptions.} 
Our current framework enforces a global trust region radius $\delta$ across all tokens. This assumption of a uniform trust budget homogenizes the diverse landscape of language generation, overlooking the varying informational value of tokens. 
In practice, routine syntactic connectors and pivotal reasoning leaps likely require distinct stability margins. 
Consequently, a fixed $\delta$ represents a compromise: it may be inadvertently too loose for high-confidence syntax (risking instability) while simultaneously being too tight for complex reasoning steps (stifling deep exploration).

\paragraph{Future Work.}
A natural evolution of this framework is to transition from static to \textit{adaptive} Band operators. 
We plan to investigate methods where the radius $\delta_t$ is dynamically modulated by token-level metrics, such as policy entropy or semantic uncertainty. 
By assigning tighter constraints to low-entropy syntactic transitions and relaxing bounds for high-stakes reasoning steps, we aim to further disentangle the trade-off between optimization stability and the exploration of novel reasoning paths.

\section{Conclusion}
We introduced BandPO, a principled optimization framework that bridges the gap between computationally efficient clipping and rigorous trust region constraints. By projecting $f$-divergence balls into dynamic, probability-aware bounds, BandPO effectively resolves the exploration bottleneck for high-advantage tail actions while maintaining training stability. Extensive experiments across models from 1.5B to 8B demonstrate that BandPO significantly outperforms heuristic baselines and robustly prevents entropy collapse. These results highlight the limitations of static heuristics and demonstrate the efficacy of geometrically grounded constraints in optimizing complex reasoning policies.

\section*{Impact Statement}
This paper presents work whose goal is to advance the field of Machine Learning. There are many potential societal consequences of our work, none of which we feel must be specifically highlighted here.
\clearpage
\bibliographystyle{plainnat}
\bibliography{main}
\clearpage
\beginappendix
\section{Omitted Proofs}
\label{app:proofs}
\subsection{Proof of Lemma~\ref{lem:complement_rescaling}}
\label{app:proof_lemma_scaling}
\begin{proof}
Without loss of generality, we analyze the structural properties of the optimal solution for the maximization formulation in Problem~\eqref{eq:upper_bound_opt}. The derivation for the minimization case is identical due to the shared feasible set.

Let $q \triangleq Q(a)$ be a fixed probability mass assigned to the target action $a$. To maximize the feasible range of $q$ under the divergence constraint, the optimal policy must distribute the remaining probability mass $1-q$ over the complement set $\mathcal{V}' = \mathcal{V} \setminus \{a\}$ such that the divergence contribution is minimized. This sub-problem is formulated as:
\begin{equation}
\min_{\{Q(b)\}_{b \in \mathcal{V}'}} \quad \sum_{b \in \mathcal{V}'} P(b) f\left(\frac{Q(b)}{P(b)}\right) \quad \text{s.t.} \quad \sum_{b \in \mathcal{V}'} Q(b) = 1 - q.
\end{equation}

\textbf{1. Optimality via Symmetry and Convexity.}
Observe that the objective function is a sum of strictly convex terms, and the variables $\{Q(b)\}_{b \in \mathcal{V}'}$ appear symmetrically in the objective (weighted by $P(b)$) and the constraint.
Due to the strict convexity of the generator function $f$, the global minimum of this sub-problem must satisfy the symmetry condition where the likelihood ratio is constant across all complement actions.
Formally, for any $b_i, b_j \in \mathcal{V}'$, optimality requires:
\begin{equation}
\frac{Q(b_i)}{P(b_i)} = \frac{Q(b_j)}{P(b_j)} = c,
\end{equation}
for some scalar $c \ge 0$. If this were not true, one could construct a strictly better solution by averaging the ratios, thereby reducing the strictly convex objective (Jensen's Inequality).
Thus, the optimal structure is necessarily $Q(b) = c P(b)$ for all $b \neq a$.

\textbf{2. Determination of the Scaling Factor.}
Substituting $Q(b) = c P(b)$ into the probability mass constraint $\sum_{b \in \mathcal{V}'} Q(b) = 1 - q$ yields:
\begin{equation}
c \sum_{b \in \mathcal{V}'} P(b) = 1 - q \implies c (1 - P(a)) = 1 - q \implies c = \frac{1 - q}{1 - P(a)}.
\end{equation}
Recall that $q = r P(a)$ where $r$ is the ratio at the target action. This recovers the scaling factor $c(r) = \frac{1 - r P(a)}{1 - P(a)}$.

\textbf{3. Strict Interiority.}
In the context of LLM post-training, the reference policy $\pi_{\text{old}}$ is derived from a Softmax distribution, ensuring $P(v) > 0$ for all $v \in \mathcal{V}$.
Furthermore, the trust-region radius $\delta$ is typically small (e.g., $\delta \ll D_f(\mathbb{I}_a \| P)$), ensuring that the feasible set does not include the degenerate solution where $Q(a)=1$ (which would imply an infinite or maximally large divergence).
Consequently, we strictly have $q < 1$, which implies $c = \frac{1-q}{1-P(a)} > 0$.
Since $P(b) > 0$ and $c > 0$, it follows that $Q(b) > 0$ for all $b$. Thus, the optimal solution strictly resides in the interior of the probability simplex, justifying the validity of the derivation without requiring ad-hoc assumptions.
\end{proof}
\subsection{Proof of Theorem~\ref{thm:scalar_reduction}}
\label{app:proof_thm_scalar}

\begin{proof}
We aim to prove that the scalar function $g_f(p, r)$ is strictly convex and that the roots of $g_f(p, r) = \delta$ uniquely determine the optimal clipping bounds.

\textbf{1. Strict Convexity of $g_f(p, r)$.}
Recall the definition of the scalarized divergence function:
\begin{equation}
g_f(p, r) = p f(r) + (1 - p) f\left( c(r) \right), \quad \text{where } c(r) = \frac{1 - rp}{1 - p}.
\end{equation}

We compute the first and second derivatives with respect to $r$. 
First, note that $\frac{d c(r)}{d r} = \frac{-p}{1-p}$.
The first derivative is:
\begin{equation}
\frac{\partial g_f}{\partial r} = p f'(r) + (1-p) f'(c(r)) \left( \frac{-p}{1-p} \right) = p \left[ f'(r) - f'(c(r)) \right].
\end{equation}

The second derivative is:
\begin{equation}
\frac{\partial^2 g_f}{\partial r^2} = p f''(r) - p f''(c(r)) \left( \frac{-p}{1-p} \right) = p f''(r) + \frac{p^2}{1-p} f''(c(r)).
\end{equation}
Since $f$ is strictly convex by definition, we have $f''(x) > 0$ for all $x$. Given that $p \in (0,1)$, it follows that $\frac{\partial^2 g_f}{\partial r^2} > 0$ for all valid $r$. Thus, $g_f(p, r)$ is strictly convex.

\textbf{2. Global Minimum.}
We examine the critical point where $\frac{\partial g_f}{\partial r} = 0$.
$p [f'(r) - f'(c(r))] = 0 \implies f'(r) = f'(c(r))$. Since $f$ is strictly convex, $f'$ is strictly increasing and injective, implying $r = c(r)$.
Solving $r = \frac{1-rp}{1-p}$ yields $r(1-p) = 1-rp \implies r = 1$.
At $r=1$, we have $g_f(p, 1) = p f(1) + (1-p) f(1) = 0$ (since $f(1)=0$).
Therefore, $g_f$ achieves its unique global minimum of 0 at $r=1$.

\textbf{3. Existence and Uniqueness of Roots.}
Since $g_f(p, r)$ is strictly convex and minimal at $r=1$, it is strictly decreasing on the interval $[0, 1)$ and strictly increasing on $(1, 1/p]$.
For any valid trust region $\delta > 0$:
\begin{itemize}
    \item At the minimum, $g_f(p, 1) = 0 < \delta$.
    \item As $r$ moves away from 1 towards the boundaries (0 or $1/p$), $g_f$ increases strictly. Assuming $\delta$ is within the feasible range of the divergence (which is true for standard small $\delta$), the Intermediate Value Theorem guarantees exactly one root in $[0, 1)$ (denoted $\underline{r}$) and exactly one root in $(1, 1/p]$ (denoted $\overline{r}$).
\end{itemize}
Thus, the inequality constraint $D_f(Q \| P) \le \delta$, which is equivalent to $g_f(p, r) \le \delta$, corresponds precisely to the interval $r \in [\underline{r}, \overline{r}]$. 
Since the objective is to maximize (or minimize) $r$, the solution must lie on the boundary of the feasible interval. Thus, the optimal bounds are strictly defined by the unique roots of the binding equation $g_f(p, r) = \delta$.
\end{proof}
\subsection{Proof of Proposition~\ref{prop:limits} (Asymptotic Behavior)}
\label{app:proof_prop_limits}
\begin{proof}
We investigate the limiting behavior of the optimal bounds $\overline{r}_{f,\delta}(p)$ and $\underline{r}_{f,\delta}(p)$ by analyzing the feasibility set defined by $g_f(p, r) \le \delta$.

\textbf{1. Tail Expansion ($p \to 0^+$).}
Recall the scalarized divergence $g_f(p, r) = p f(r) + (1-p) f(c(r))$ with $c(r) = \frac{1-rp}{1-p}$.
As $p \to 0^+$, for any fixed finite $r$, we have $c(r) \to 1$. Since $f(1)=0$, the complement term $(1-p)f(c(r))$ vanishes. The constraint asymptotically behaves as $p f(r) \le \delta$.
\begin{itemize}[leftmargin=*]
    \item \textbf{Upper Bound ($\overline{r}$):} For $r > 1$, convex $f$ grows superlinearly (or at least linearly for valid divergences). The binding condition implies $f(\overline{r}) \approx \delta / p$. As $p \to 0^+$, $\delta/p \to +\infty$, forcing $f(\overline{r}) \to +\infty$ and consequently $\overline{r} \to +\infty$.
    \item \textbf{Lower Bound ($\underline{r}$):} The lower bound is defined as $\underline{r} = \min \{r \in [0, 1] \mid g_f(p, r) \le \delta\}$.
    As $p \to 0$, the maximum divergence contribution from the target action is bounded by $p f(0)$ (assuming $r=0$). If $\lim_{p\to 0} p f(0) = 0 < \delta$, then for sufficiently small $p$, the inequality $g_f(p, r) \le \delta$ holds for all $r \in [0, 1]$. In this regime, the constraint is inactive, and the minimization is solely determined by the simplex boundary $r \ge 0$. Thus, $\lim_{p \to 0} \underline{r} = 0$.
\end{itemize}

\textbf{2. Head Constriction ($p \to 1^-$).} We analyze the distinct limiting behaviors for the upper and lower bounds separately as $p \to 1^-$.
\begin{itemize}
\item \textbf{Upper Bound ($\overline{r} \to 1$):} By definition, $Q(a) = rP(a) = rp$. Since $Q \in \Delta^V$, we strictly require $Q(a) \le 1$, implying the feasible domain for $r$ is $r \in [0, 1/p]$. As $p \to 1^-$, the feasible upper limit $1/p \to 1$. Since the upper bound must theoretically satisfy $\overline{r} \ge 1$, the domain constraint forcefully squeezes it: $1 \le \lim_{p \to 1^-} \overline{r} \le \lim_{p \to 1^-} (1/p) = 1$. Thus, $\lim_{p \to 1^-} \overline{r} = 1$.

\item \textbf{Lower Bound ($\underline{r} \to r^\ast$):} Unlike the upper bound, the lower bound $\underline{r} \le 1$ operates within $[0, 1]$ and is \textit{not} squeezed by the simplex boundary as $p \to 1^-$. Instead, it converges to a constant $r^\ast$ governed by the asymptotic properties of the $f$-divergence. Let $C_\infty \triangleq \lim_{u \to +\infty} \frac{f(u)}{u}$ denote the asymptotic linear growth rate. As $p \to 1^-$, assuming $\underline{r} < 1$, the complement ratio $u = \frac{1-\underline{r}p}{1-p} \to +\infty$. We can rewrite the complement divergence term from the binding equation as:
\begin{equation}
(1-p) f(u) = (1-p) u \frac{f(u)}{u} = \left( 1 - p + p(1-\underline{r}) \right) \frac{f(u)}{u} \xrightarrow{p \to 1^-} (1-\underline{r}) C_\infty.
\end{equation}
Taking the limit $p \to 1^-$ on both sides of the scalarized binding equation $g_f(p, \underline{r}) = \delta$, we obtain:
\begin{equation}
f(r^\ast) + (1-r^\ast) C_\infty = \delta, \quad \text{where } r^\ast = \lim_{p \to 1^-} \underline{r}_{f,\delta}(p).
\end{equation}
Depending on the specific $f$-divergence employed, solving this exact limit equation yields distinct properties for the lower bound:
\begin{itemize}
    \item \textbf{Total Variation (TV):} The generator $f_{TV}(u) = \frac{1}{2}|u-1|$ yields $C_\infty = \lim_{u \to +\infty} \frac{u-1}{2u} = \frac{1}{2}$. The limit equation becomes $\frac{1}{2}(1-r^\ast) + \frac{1}{2}(1-r^\ast) = \delta \implies 1-r^\ast = \delta$. Thus, $r^\ast = \max(1-\delta, 0)$.
    \item \textbf{Pearson $\chi^2$ Divergence:} The generator $f_{\chi^2}(u) = (u-1)^2$ grows super-linearly, yielding $C_\infty = +\infty$. To maintain a finite budget $\delta$, the term $(1-r^\ast) \cdot (+\infty)$ forces $1-r^\ast = 0$. Thus, $r^\ast = 1$.
    \item \textbf{KL Divergence (TRPO/PPO):} The standard generator $f_{KL}(u) = -\log u + u - 1$ yields $C_\infty = \lim_{u \to +\infty} \frac{-\log u + u - 1}{u} = 1$. The limit equation becomes $(-\log r^\ast + r^\ast - 1) + 1 \cdot (1-r^\ast) = \delta \implies -\log r^\ast = \delta$. Thus, $r^\ast = e^{-\delta}$.
\end{itemize}
This theoretical derivation perfectly aligns with the empirical lower bounds observed at $p \to 1$ in Figure 2.
\end{itemize}
This completes the proof. \hfill $\square$
\end{proof}

\subsection{Proof of Proposition~\ref{prop:monotonicity} (Monotonicity)}
\label{app:proof_prop_monotonicity}
\begin{proof}
We establish the strict monotonicity of the roots of $F(p, r) \triangleq g_f(p, r) - \delta = 0$ using the Implicit Function Theorem. The sensitivity of the root $r$ with respect to $p$ is given by:
\begin{equation}
\frac{d r}{d p} = - \frac{\partial F / \partial p}{\partial F / \partial r}.
\end{equation}

\textbf{1. Positivity of $\frac{\partial F}{\partial p}$ via Bregman Divergence.}
Differentiating $g_f$ with respect to $p$, and noting that $\frac{\partial c}{\partial p} = \frac{1-r}{(1-p)^2}$, we derive:
\begin{align}
\frac{\partial F}{\partial p} &= f(r) - f(c) + (1-p) f'(c) \frac{\partial c}{\partial p} \nonumber \\
&= f(r) - f(c) + f'(c) \frac{1-r}{1-p}.
\end{align}
Using the algebraic identity $c - r = \frac{1-rp}{1-p} - r = \frac{1-r}{1-p}$, we substitute this into the expression:
\begin{equation}
\label{eq:bregman_identity}
\frac{\partial F}{\partial p} = f(r) - f(c) - f'(c)(r - c) \equiv D_f(r, c),
\end{equation}

where $D_f(r, c)$ is the \textbf{Bregman divergence} generated by the strictly convex function $f$. 
By the strict convexity of $f$, the Bregman divergence is strictly positive for all $r \neq c$. Since $r \neq c$ holds strictly for any $r \neq 1$ (which implies $p < 1$ and $\delta > 0$), we conclude that $\frac{\partial F}{\partial p} > 0$ always.

\textbf{2. Sign Analysis of $\frac{\partial F}{\partial r}$.}
Differentiation with respect to $r$ yields:
\begin{equation}
\frac{\partial F}{\partial r} = p f'(r) + (1-p)f'(c)\left(\frac{-p}{1-p}\right) = p [f'(r) - f'(c)].
\end{equation}
Since $f$ is strictly convex, $f'$ is strictly increasing. We analyze the two roots:
\begin{itemize}[leftmargin=*]
    \item \textbf{Upper Bound ($\overline{r} > 1$):} Here $r > 1$. The complement ratio satisfies $c = \frac{1-rp}{1-p} < 1$ (since $r>1 \implies 1-rp < 1-p$). Thus $r > c$, implying $f'(r) > f'(c)$. Hence, $\frac{\partial F}{\partial r} \big|_{\overline{r}} > 0$.
    \item \textbf{Lower Bound ($\underline{r} < 1$):} Here $r < 1$. It follows that $c > 1$. Thus $r < c$, implying $f'(r) < f'(c)$. Hence, $\frac{\partial F}{\partial r} \big|_{\underline{r}} < 0$.
\end{itemize}

\textbf{3. Conclusion.}
Applying the implicit derivative signs:
\begin{align}
\frac{d \overline{r}}{d p} &= - \frac{(+)}{(+)} < 0 \implies \text{Strictly Decreasing}, \\
\frac{d \underline{r}}{d p} &= - \frac{(+)}{(-)} > 0 \implies \text{Strictly Increasing}.
\end{align}
This completes the proof.
\end{proof}
\subsection{Proof of Proposition~\ref{prop:saturation} (Constraint Activity and Saturation)}
\label{app:proof_prop_saturation}

\begin{proof}
We analyze the existence of interior roots for the binding equation $g_f(p, r) = \delta$ relative to the simplex boundaries. Recall from Theorem~\ref{thm:scalar_reduction} that $g_f(p, r)$ is strictly convex with a global minimum $g_f(p, 1) = 0$.

\paragraph{Upper Bound Saturation.}
Consider the function $g_f(p, r)$ on the interval $[1, r_{\max}]$, where $r_{\max} = 1/p$.
Since $g_f$ is strictly increasing for $r > 1$, the maximum divergence is attained at the boundary: $g_{\text{peak}} \triangleq g_f(p, r_{\max})$.
\begin{itemize}[leftmargin=*]
    \item \textbf{Inactive Regime ($g_{\text{peak}} \le \delta$):} 
    If the maximal divergence is within the trust region, then $g_f(p, r) \le g_{\text{peak}} \le \delta$ holds for all $r \in [1, r_{\max}]$. The constraint is inactive. Since the objective in Eq.~\eqref{eq:upper_bound_opt} is to maximize the ratio, the optimal solution saturates at the boundary: $\overline{r}_{f,\delta}(p) = r_{\max}$. In this case, no interior root exists for $g_f(p, r) = \delta$ (unless $\delta = g_{\text{peak}}$, where the root is the boundary itself).
    
    \item \textbf{Active Regime ($g_{\text{peak}} > \delta$):}
    Here we have $g_f(p, 1) = 0 < \delta$ and $g_f(p, r_{\max}) > \delta$. Since $g_f$ is continuous on $[1, r_{\max}]$, the Intermediate Value Theorem guarantees the existence of a root $r^{\dagger} \in (1, r_{\max})$ such that $g_f(p, r^{\dagger}) = \delta$. Furthermore, due to the strict monotonicity of $g_f$ on this interval, this root is unique.
\end{itemize}

\paragraph{Lower Bound Saturation.}
Symmetrically, consider the interval $[r_{\min}, 1]$ where $r_{\min}=0$. $g_f(p, r)$ is strictly decreasing on this interval.
\begin{itemize}[leftmargin=*]
    \item \textbf{Inactive Regime ($g_f(p, r_{\min}) \le \delta$):} The constraint $g_f(p, r) \le \delta$ holds for all $r \in [0, 1]$. Minimizing $r$ leads to saturation at the boundary: $\underline{r}_{f,\delta}(p) = 0$.
    \item \textbf{Active Regime ($g_f(p, r_{\min}) > \delta$):}
    Since $g_f(p, 1) = 0 < \delta$ and $g_f(p, 0) > \delta$, there exists a unique root in $(0, 1)$.
\end{itemize}
Combining these cases yields the formulation in Proposition~\ref{prop:saturation}.
\end{proof}
\subsection{Numerical Implementation Details}
\label{app:numerical_solver}

This section details the numerical procedures for solving the Band bounds in the general case where closed-form solutions are unavailable (e.g., KL divergence).

\subsubsection{Convergence Guarantees}
\label{app:solver_proofs}

We first establish the theoretical basis for the global convergence of bracketed root-finding methods on the scalarized constraint function.

\begin{lemma}[Strict Monotonicity on Intervals]
\label{lem:gf_monotone_intervals}
Let $f: \mathbb{R}_+ \to \mathbb{R}$ be strictly convex with a unique global minimum at $f(1)=0$. The scalarized constraint function
\begin{equation}
    g_f(p, r) \triangleq p f(r) + (1-p) f\left( \frac{1 - rp}{1 - p} \right)
\end{equation}
is strictly convex with respect to $r$. Specifically, $g_f(p, r)$ is strictly decreasing on the interval $[r_{\min}, 1]$ and strictly increasing on $[1, r_{\max}]$.
\end{lemma}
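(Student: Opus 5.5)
The plan is to prove the statement without assuming that $f$ is twice differentiable. The earlier proof of Theorem~\ref{thm:scalar_reduction} relied on $f''>0$, which strict convexity alone does not guarantee. I would work directly with the convexity inequality and Jensen's inequality on the domain $r\in[r_{\min},r_{\max}]=[0,1/p]$. First I would check that the arguments stay in $\mathbb{R}_+$: for $r\in[0,1/p]$ the complement ratio $c(r)=\frac{1-rp}{1-p}$ lies in $[0,\frac{1}{1-p}]\subset\mathbb{R}_+$, so $g_f(p,\cdot)$ is well defined. For generators such as KL that blow up at $0$, I would adopt the extended-valued convention $f(0)=+\infty$, or restrict to the open interval where $g_f$ is finite; the argument is unchanged.

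\textbf{Step 1 (strict convexity).} The map $r\mapsto c(r)$ is affine, so $r\mapsto f(c(r))$ is convex. Since $c$ is non-constant, this term is in fact strictly convex, although convexity suffices here. The term $pf(r)$ with $p>0$ is strictly convex. A sum of a strictly convex function and a convex function is strictly convex, so $g_f(p,\cdot)$ is strictly convex on $[0,1/p]$.

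\textbf{Step 2 (unique minimum at $r=1$).} The key observation is that $pr+(1-p)c(r)=1$ for every $r$. Jensen's inequality with weights $p,\,1-p$ then gives
\begin{equation*}
g_f(p,r)=pf(r)+(1-p)f(c(r))\ \ge\ f\bigl(pr+(1-p)c(r)\bigr)=f(1)=0,
\end{equation*}
with equality, by strict convexity, iff $r=c(r)$, i.e. iff $r=1$. Hence $g_f(p,r)\ge 0$, and $r=1$ is the unique minimizer with $g_f(p,1)=0$.

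\textbf{Step 3 (monotonicity).} Take $0\le r_1<r_2\le 1$ and write $r_2=\lambda r_1+(1-\lambda)\cdot 1$ with $\lambda\in(0,1]$. The case $\lambda=1$ is vacuous, since it would force $r_2=r_1$. By strict convexity and Step~2,
\begin{equation*}
g_f(p,r_2)<\lambda g_f(p,r_1)+(1-\lambda)g_f(p,1)=\lambda g_f(p,r_1)\le g_f(p,r_1),
\end{equation*}
where the last inequality uses $g_f(p,r_1)\ge 0$. This shows that $g_f$ is strictly decreasing on $[0,1]$. The increasing case on $[1,1/p]$ is symmetric: write $r_1$ as a convex combination of $1$ and $r_2$.

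The main obstacle is conceptual rather than computational. One must avoid the paper's earlier implicit assumption that $f''>0$ everywhere, which fails for piecewise-linear generators such as TV. Note also that the hypothesis as stated ("unique global minimum at $f(1)=0$") is stronger than strict convexity with $f(1)=0$ requires; the argument above uses only $f(1)=0$. A secondary point is the boundary behaviour at $r=0$ or $r=1/p$ for divergences with $f(0)=+\infty$. These points enter only as endpoints, and the extended-valued convention keeps the inequalities valid.
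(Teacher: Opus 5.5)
Your proof is correct, and it reaches the conclusion by a different route from the paper's. The paper's proof is differential. It imports from Theorem~\ref{thm:scalar_reduction} the formulas $\partial_r g_f = p\,[f'(r)-f'(c(r))]$ and $\partial_r^2 g_f = p f''(r)+\frac{p^2}{1-p}f''(c(r))$. It asserts the latter is positive because ``$f$ strictly convex, hence $f''>0$'', which is the wrong direction of that implication. It then finds the minimizer via $f'(r)=f'(c(r))\Rightarrow r=c(r)\Rightarrow r=1$, and reads monotonicity off the sign of $\partial_r g_f$ on either side of $1$. You make three substitutions. Strict convexity comes from ``strictly convex term plus convex function composed with an affine map''. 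The critical-point computation is replaced by the identity $pr+(1-p)c(r)=1$ and Jensen, which also gives $g_f\ge 0$ directly, i.e.\ the usual nonnegativity of $f$-divergences. The derivative-sign step is replaced by a chord inequality. Your route needs no differentiability of $f$ at all, and it handles endpoint blow-ups such as $f_{\mathrm{KL}}(0)=+\infty$ through the extended-valued convention. The paper's route gives the explicit derivative $p\,[f'(r)-f'(c(r))]$, whose sign analysis it reuses in the implicit-function argument for Proposition~\ref{prop:monotonicity}.

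There are two small fixes. First, in Step~3 the coefficient is $\lambda=(1-r_2)/(1-r_1)\in[0,1)$, not $(0,1]$. The case you actually omit is $\lambda=0$, i.e.\ $r_2=1$, where your strict inequality would read $g_f(p,1)<g_f(p,1)$. Dispose of it directly with Step~2: $g_f(p,1)=0<g_f(p,r_1)$. Handle the case $r_1=1$ on the increasing side the same way. Second, TV is not the right motivating example. $\tfrac12|u-1|$ is not strictly convex, so it falls outside the lemma and outside your Steps~1--2, since the Jensen equality case needs strict convexity. Better examples are strictly convex generators that break the paper's justification but not yours:
\begin{itemize}
\item $(u-1)^4$, where $f''(1)=0$;
\item $|u-1|+(u-1)^2$, which is not differentiable at $1$.
\end{itemize}
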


\begin{proof}
The strict convexity of $g_f$ follows directly from the strict convexity of $f$ and the linearity of the argument transformation $c(r)$ (as shown in Theorem~\ref{thm:scalar_reduction}).
Since $g_f(p, r)$ is strictly convex and differentiable on its domain, and attains its global minimum $g_f(p, 1) = 0$, the gradient $\nabla_r g_f(p, r)$ must be strictly negative for $r < 1$ and strictly positive for $r > 1$.
This strictly implies the monotonicity properties required for root isolation.
\end{proof}

\begin{theorem}[Global Convergence of Bisection]
\label{thm:bisection_global}
Consider the active regime where $\min(g_f(p, r_{\min}), g_f(p, r_{\max})) > \delta$. The equation $g_f(p, r) = \delta$ possesses exactly two unique roots: a lower bound $\underline{r} \in (r_{\min}, 1)$ and an upper bound $\overline{r} \in (1, r_{\max})$.
Applying the bisection method on the brackets $[r_{\min}, 1]$ and $[1, r_{\max}]$ guarantees linear convergence to $\underline{r}$ and $\overline{r}$, respectively, to any arbitrary precision $\epsilon$.
\end{theorem}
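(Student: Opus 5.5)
The plan is to combine Lemma~\ref{lem:gf_monotone_intervals} with the standard bisection invariant, handling the two brackets $[r_{\min},1]$ and $[1,r_{\max}]$ separately and in parallel.

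\textbf{Existence and uniqueness of the roots.} Fix $p\in(0,1)$ and set $h(r)\triangleq g_f(p,r)-\delta$. The function $g_f(p,\cdot)$ is continuous on $[r_{\min},r_{\max}]$, because it is a sum of convex functions composed with affine maps. It is finite at the endpoints, which is implicit in the active-regime hypothesis. Since $g_f(p,1)=0$ (Theorem~\ref{thm:scalar_reduction}), we have $h(1)=-\delta<0$. The hypothesis gives $h(r_{\min})>0$ and $h(r_{\max})>0$. The Intermediate Value Theorem then yields a root in $(r_{\min},1)$ and a root in $(1,r_{\max})$. By Lemma~\ref{lem:gf_monotone_intervals}, $h$ is strictly decreasing on $[r_{\min},1]$ and strictly increasing on $[1,r_{\max}]$. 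So each interval contains exactly one root, and there is none at $r=1$. These roots coincide with $\underline r$ and $\overline r$ of Theorem~\ref{thm:scalar_reduction}, equivalently Proposition~\ref{prop:saturation} in its non-saturated branch.

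\textbf{Convergence of bisection.} On the lower bracket, initialize $[a_0,b_0]=[r_{\min},1]$. At each step take the midpoint $m_k$ and keep the half on which $h$ changes sign. Monotonicity makes the sign test unambiguous: $h(m_k)>0$ means $m_k<\underline r$, and $h(m_k)<0$ means $m_k>\underline r$. Hence the invariant $h(a_k)>0\ge h(b_k)$ holds at every step, and by uniqueness $\underline r\in[a_k,b_k]$. The bracket length is $b_k-a_k=2^{-k}(1-r_{\min})$, so $|m_k-\underline r|\le 2^{-(k+1)}(1-r_{\min})$. This is linear convergence with rate $1/2$. Precision $\epsilon$ is therefore reached after $\lceil\log_2((1-r_{\min})/\epsilon)\rceil$ iterations. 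The upper bracket $[1,r_{\max}]$ is identical with the signs reversed, giving the iteration count $\lceil\log_2((r_{\max}-1)/\epsilon)\rceil$.

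\textbf{Main obstacle.} The only delicate point is continuity at the endpoints. For KL, $f(0)=+\infty$, so $g_f(p,0)$ and $g_f(p,1/p)$ are infinite. I would handle this by working in the extended reals: the hypothesis still makes the sign at the endpoint positive, and bisection never evaluates at an endpoint after the first step, since every midpoint is interior. Equivalently, one can replace $r_{\min}$ by a small $\eta>0$ with $g_f(p,\eta)>\delta$, which exists by continuity on $(0,1]$ and divergence at $0$. Everything else follows directly from the earlier lemma.
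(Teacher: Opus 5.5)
Your proposal is correct and follows the paper's route: the Intermediate Value Theorem together with the strict monotonicity of Lemma~\ref{lem:gf_monotone_intervals} gives exactly one root per bracket, and the standard halving argument then gives convergence. You add the explicit sign invariant, the iteration count, and the extended-real treatment of the KL endpoints, all of which the paper passes over. One small fix: the active-regime hypothesis does not make $g_f$ finite at the endpoints (your own KL remark shows this), and convexity alone does not give continuity at the boundary of $\mathbb{R}_+$. What you actually need is $\lim_{r\to r_{\min}^+} g_f(p,r)>\delta$ and $\lim_{r\to r_{\max}^-} g_f(p,r)>\delta$, which hold for the standard generators and under the paper's differentiability assumption.
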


\begin{proof}
By Lemma~\ref{lem:gf_monotone_intervals}, $g_f(p, r)$ is continuous and strictly monotonic on each identified bracket.
In the active regime, the function values at the bracket endpoints strictly enclose the target value $\delta$ (since $g_f(p, 1) = 0 < \delta$ and the boundary values exceed $\delta$).
The Intermediate Value Theorem (IVT) guarantees the existence of a root in each interval, and strict monotonicity guarantees its uniqueness.
Standard convergence analysis of the bisection method applies directly.
\end{proof}

\subsubsection{Standard Bisection Solver Algorithms}
\label{app:solver_bisection_alg}

We present the standard bisection procedure to solve $g_f(p, r) - \delta = 0$.
Let $\epsilon$ be the convergence tolerance (e.g., $10^{-6}$).

\paragraph{Upper Bound Solver ($\overline{r}$).}
The objective function $h(r) \triangleq g_f(p, r) - \delta$ is strictly \textbf{increasing} on the interval $[1, r_{\max}]$.
\begin{itemize}[leftmargin=1.5em, nosep]
    \item \textbf{Initialize:} $L \leftarrow 1$, $R \leftarrow 1/p$.
    \item \textbf{Iterate} until $|R - L| < \epsilon$:
    \begin{equation}
        m \leftarrow \frac{L + R}{2}; \quad
        \text{if } g_f(p, m) \le \delta, \text{ set } L \leftarrow m; \text{ else } R \leftarrow m.
    \end{equation}
    \item \textbf{Return:} $L$ (approximating $\overline{r}_{f,\delta}$).
\end{itemize}

\paragraph{Lower Bound Solver ($\underline{r}$).}
The objective function is strictly \textbf{decreasing} on the interval $[0, 1]$. Note the reversed conditional update logic compared to the upper bound.
\begin{itemize}[leftmargin=1.5em, nosep]
    \item \textbf{Initialize:} $L \leftarrow 0$, $R \leftarrow 1$.
    \item \textbf{Iterate} until $|R - L| < \epsilon$:
    \begin{equation}
        m \leftarrow \frac{L + R}{2}; \quad
        \text{if } g_f(p, m) \le \delta, \text{ set } R \leftarrow m; \text{ else } L \leftarrow m.
    \end{equation}%
    \item \textbf{Return:} $R$ (approximating $\underline{r}_{f,\delta}$).
\end{itemize}



\subsubsection{Instantiation: Forward KL Divergence}
We illustrate the solver with the forward KL divergence, which is the standard metric for TRPO. This trust region is recovered by the generator function $f_{KL}(u) = -\log u + u - 1$. Substituting $f_{KL}$ into the scalarized form yields the binding equation:
\begin{equation}
    p (-\log r + r - 1) + (1 - p) \Big( -\log c(r) + c(r) - 1 \Big) = \delta,
\end{equation}
where $c(r) = \frac{1-rp}{1-p}$ is the complement scaling factor. Simplifying the linear terms (which exactly cancel out to zero: $pr - p + 1 - pr - 1 + p = 0$), this reduces to a transcendental equation strictly dominated by the logarithmic terms:
\begin{equation}
    -p \log r - (1 - p) \log \left( \frac{1 - rp}{1 - p} \right) = \delta.
\end{equation}
As this equation involves $r$ logarithmically both inside and outside the complement scaling argument, it admits no closed-form solution in terms of elementary functions. Therefore, the generic numerical solver described in Appendix A.6.2 is necessary to compute the exact Band bounds for KL-based trust regions.

\subsection{Proof of Proposition}
\label{app:proof_prop_closedform}

\begin{proof}
Recall from Theorem~\ref{thm:scalar_reduction} that the exact bounds $\overline{r}_{f,\delta}(p)$ and $\underline{r}_{f,\delta}(p)$ correspond to the roots of the scalar binding equation:
\begin{equation}
\label{eq:proof_binding_eq}
g_f(p, r) \triangleq p f(r) + (1 - p) f\left( \frac{1 - r p}{1 - p} \right) = \delta.
\end{equation}
We substitute the specific generator functions for Total Variation and Pearson $\chi^2$ to derive the solutions.

\paragraph{1. Total Variation (TV).}
The generator function is defined as $f_{\mathrm{TV}}(u) = \frac{1}{2}|u - 1|$. 
Substituting this into Eq.~\eqref{eq:proof_binding_eq} gives:
\begin{equation}
g_{\mathrm{TV}}(p, r) = \frac{p}{2} |r - 1| + \frac{1 - p}{2} \left| \frac{1 - rp}{1 - p} - 1 \right| = \delta.
\end{equation}
We simplify the complement term. Note that:
\begin{equation}
\frac{1 - rp}{1 - p} - 1 = \frac{1 - rp - (1 - p)}{1 - p} = \frac{p(1 - r)}{1 - p}.
\end{equation}
Thus, the second term in the binding equation simplifies to:
\begin{equation}
\frac{1 - p}{2} \left| \frac{p(1 - r)}{1 - p} \right| = \frac{1 - p}{2} \cdot \frac{p}{1 - p} |1 - r| = \frac{p}{2} |r - 1|.
\end{equation}
Substituting this back yields a unified expression:
\begin{equation}
g_{\mathrm{TV}}(p, r) = \frac{p}{2} |r - 1| + \frac{p}{2} |r - 1| = p |r - 1|.
\end{equation}
Setting $g_{\mathrm{TV}}(p, r) = \delta$, we solve for $r$:
\begin{equation}
|r - 1| = \frac{\delta}{p} \implies r = 1 \pm \frac{\delta}{p}.
\end{equation}

\paragraph{2. Pearson $\chi^2$ Divergence.}
The generator function is $f_{\chi^2}(u) = (u - 1)^2$. 
Substituting this into Eq.~\eqref{eq:proof_binding_eq}:
\begin{equation}
g_{\chi^2}(p, r) = p (r - 1)^2 + (1 - p) \left( \frac{1 - rp}{1 - p} - 1 \right)^2 = \delta.
\end{equation}
Using the same algebraic simplification for the complement term as in the TV case:
\begin{equation}
\left( \frac{1 - rp}{1 - p} - 1 \right)^2 = \left( \frac{p(1 - r)}{1 - p} \right)^2 = \frac{p^2}{(1 - p)^2} (r - 1)^2.
\end{equation}
Substituting this back into $g_{\chi^2}(p, r)$:
\begin{align}
g_{\chi^2}(p, r) &= p (r - 1)^2 + (1 - p) \frac{p^2}{(1 - p)^2} (r - 1)^2 \\
&= p (r - 1)^2 \left[ 1 + \frac{p}{1 - p} \right] \\
&= p (r - 1)^2 \left[ \frac{1 - p + p}{1 - p} \right] \\
&= \frac{p}{1 - p} (r - 1)^2.
\end{align}
Setting $g_{\chi^2}(p, r) = \delta$ and solving for $r$:
\begin{equation}
(r - 1)^2 = \delta \frac{1 - p}{p} \implies r = 1 \pm \sqrt{\frac{\delta (1 - p)}{p}}.
\end{equation}

\paragraph{Practical Implementation (Simplex Constraints).} 
The theoretical bounds derived above assume the trust region lies entirely within the simplex. In practice, to ensure global feasibility (i.e., $Q(a) \in [0,1]$), the effective bounds must be clamped. Applying the saturation logic from Proposition~\ref{prop:saturation}, the final operational bounds are:
\begin{equation}
\label{eq:clamp_final_bounds}
\overline{r}_{f,\delta}^{\star}(p) \triangleq \min\!\left(\overline{r}_{f,\delta}(p), \; \frac{1}{p}\right),
\qquad
\underline{r}_{f,\delta}^{\star}(p) \triangleq \max\!\left(\underline{r}_{f,\delta}(p), \; 0\right).
\end{equation}
\end{proof}
\end{document}